\newcommand{\argmin}{\arg\!\min}
\crefname{equation}{}{}
\crefname{figure}{Figure}{Figures}
\crefname{table}{Table}{Tables}
\crefname{section}{Section}{Sections}
\crefname{theorem}{Theorem}{Theorems}
\crefname{lemma}{Lemma}{Lemmas}
\theoremstyle{plain}
\newtheorem{theorem}{Theorem}[]
\newtheorem{lemma}{Lemma}[]
\newtheorem{definition}{Definition}[]
\theoremstyle{definition}
\theoremstyle{remark}
\newtheorem*{rem}{Remark}
\newcommand{\tr}[1]{\textbf{Tr}{(#1)}}
\newcommand{\dd}[1]{{\text{d}}{#1}}
\journal{Knowledge-Based Systems}
\begin{document}
	
	\begin{frontmatter}
		
		\title{ Log-based Sparse Nonnegative Matrix Factorization for Data Representation }

		\author[label1]{Chong Peng}
		\author[label1]{Yiqun Zhang}
		\author[label2]{Yongyong Chen}
		\author[label3]{Zhao Kang}
		\author[label1]{Chenglizhao Chen\corref{mycorresponding}}
		\ead{cclz123@163.com}
		\author[label4,label5]{Qiang Cheng}
		\cortext[mycorresponding]{Corresponding author: Chenglizhao Chen}

		\address[label1]{College of Computer Science and Technology, Qingdao University}
		\address[label2]{Department of Computer Science, Harbin Institute of Technology}
		\address[label3]{School of Computer Science and Engineering, University of Electronic Science and Technology of China}
		\address[label4]{Department of Computer Science, University of Kentucky}
		\address[label5]{Institute of Biomedical Informatics, University of Kentucky}
		
		\begin{abstract}
			Nonnegative matrix factorization (NMF) has been widely studied in recent years due to its effectiveness in representing nonnegative data with parts-based representations. 
			For NMF, a sparser solution implies better parts-based representation.
			However, current NMF methods do not always generate sparse solutions.
			In this paper, we propose a new NMF method with log-norm imposed on the factor matrices to enhance the sparseness.
			Moreover, we propose a novel column-wisely sparse norm, named $\ell_{2,\log}$-(pseudo) norm to enhance the robustness of the proposed method.
			The $\ell_{2,\log}$-(pseudo) norm is invariant, continuous, and differentiable.
			For the $\ell_{2,\log}$ regularized shrinkage problem, we derive a closed-form solution, which can be used for other general problems.
			Efficient multiplicative updating rules are developed for the optimization, which theoretically guarantees the convergence of the objective value sequence.
			Extensive experimental results confirm the effectiveness of the proposed method, as well as the enhanced sparseness and robustness.
		\end{abstract}
		
		\begin{keyword}
			Nonnegative matrix factorization, sparse, robust, convergence
		\end{keyword}
		
	\end{frontmatter}
	
	
	\section{Introduction}

	It has been increasingly ubiquitous to use high-dimensional data in various areas such as machine learning, data mining, and multimedia data processing, 
	which makes the task of learning from examples challenging \cite{duda2012pattern,peng2020discriminative}. 
	A widely used technique to handle such data is dimension reduction, among which matrix factorization methods have drawn significant attention.
	Matrix factorization seeks two or more low-dimensional matrices to approximate the original data such that the high-dimensional data can be represented with reduced dimensions \cite{liu2013robust,peng2021kernel}. 
	Typical matrix factorization techniques include principal component analysis (PCA), nonnegative matrix factorization (NMF), singular value decomposition (SVD), eigenvalue decomposition (EVD), etc.
	
	For some types of data, the entries are naturally nonnegative. 
	For example, the pixel values of images or the frequencies of words in a document are naturally nonnegative. 
	For such data, parts-based representation is believed to commonly exist in human brain with both psychological and physiological evidence \cite{palmer1977hierarchical,wachsmuth1994recognition,logothetis1996visual}. 
	The parts-based representation inspires us to seek two nonnegative factor matrices to approximate the original nonnegative data, which leads to the NMF. 
	Among the two factor matrices, one is considered as the basis while another is treated as the representation or soft indicator matrix.
	NMF only allows additive combinations of the basis vectors, which enables NMF to learn a parts-based representation \cite{lee1999learning}.
	
	NMF has been extensively studied in recent years \cite{Yifeng2019Multiclass,Wang2019Multiview,Li2018Robust,Yin2018Scalable,peng2022two,schmidt2009bayesian,dalhoumi2021bayesian}, 
	which has found applications in various areas, such as pattern recognition \cite{li2001learning}, multimedia analysis \cite{cooper2002summarizing}, and text mining \cite{xu2003document}.
	Basically, the goal of the original NMF is to approximate a nonnegative matrix with two nonnegative factor matrices with physical meanings \cite{lee1999learning}. 
	For the constrained optimization problem, multiplicative updating-based strategy has been developed and commonly adopted for its optimization \cite{lee2001algorithms}. 
	The original NMF seeks the factorization in Euclidean space and thus it only accounts for the linear relationship of data while omitting the nonlinear ones.
	To tackle this issue, variants such as the graph regularized NMF (GNMF) \cite{cai2011graph}, the robust manifold NMF (RMNMF) \cite{huang2014robust} are developed based on manifold learning.
	Other than manifold technique, kernel method is also used in NMF methods, which relies on the convex NMF framework \cite{ding2010convex,peng2020nonnegative}.
	It assumes that the basis can be represented as a combination of the data and thus it is doable to calculate the similarity of examples and basis in kernel space.

	For NMF methods, it is pointed out that sparser solutions reveal better parts-based representation \cite{cai2011graph}.
	However, recent studies show that NMF does not always generate sparse factorization, which implies failure in learning parts-based representation \cite{hoyer2004non,hoyer2002non}. 
	{To combat this issue, various approach have been attempted for sparse solutions, such as Bayesian sparse learning \cite{chien2012bayesian,fedorov2018unified}.
		For example, a maximum a posteriori (MAP) estimation framework is developed to address the sparse nonnegative matrix factorization problems \cite{fedorov2018unified},
		which is built upon the sparse Bayesian learning.
		The sparse Bayesian learning framework places a sparsity-promoting prior on the data \cite{wipf2004sparse}, 
		which has been shown to give rise to many models in literature \cite{wipf2010iterative}.
		Bayesian group sparse learning introduces Laplacian scale mixture distribution for sparse coding given a sparseness control parameter \cite{chien2012bayesian}.}
	It is natural to impose the $\ell_0$-norm for sparseness, which counts the number of nonzero elements in a matrix.
	However, $\ell_0$-norm is generally hard to solve and the $\ell_1$-norm has been widely used as a relaxation for sparseness property in the machine learning community. 
	Unfortunately, the $\ell_1$ may be inaccurate to approximate $\ell_0$ if there are large entries in the input matrix or vector. 
	Recently, nonconvex approximations to the rank function have drawn significant attention in various applications such as subspace learning \cite{peng2015subspace} and robust PCA \cite{Peng2019Robust}.
	It is shown that nonconvex approximations such as log-determinant rank approximation have been successful in low-rank matrix recovery problems.
	The reason is that the nonconvex approximations can better reveal the behavior of the true rank function than the convex approach.
	In fact, the low-rankness of a matrix is closely related to the sparsity of its singular values, where the rank function is equivalent to the $\ell_0$-norm of the vector of singular values.
	Thus, the success of nonconvex approximations to the rank function inspires us to design nonconvex approximations to the $\ell_0$-norm for enhanced sparse property.

	Moreover, nonnegative data such as images often have noise, which has adverse effects and degrades the learning performance.
	Thus, there is a crucial need to tackle noise effects from data \cite{huang2014robust,peng2017robust}. 
	The $\ell_{2,1}$-norm, which is defined as the summation of $\ell_2$-norms of all column vectors in a matrix, has been widely used to enforce example-wise sparsity to deal with noise effects \cite{nie2010efficient,huang2014robust,xu2010robust,mccoy2011two}. 
	Different from $\ell_1$-norm that treats all entries independently, $\ell_{2,1}$-norm measures input matrix in an example-wise way, which allows it to preserve spatial information of examples \cite{huang2014robust}. 
	For nonnegative data such as images, such property of $\ell_{2,1}$-norm is indeed desirable.
	It is noted that the calculation of $\ell_{2,1}$-norm is closely related to the $\ell_1$-norm in that it adds the $\ell_2$-norms of all columns with equal weights.
	Thus, the $\ell_{2,1}$ may have similar issues to $\ell_1$-norm in resulting column-wise sparse property.
	In this paper, to better tackle noise issues and enhance column-wise sparsity, we propose a novel $\ell_{2,\log}$-(pseudo) norm with column-wisely sparser property. 
	For the $\ell_{2,\log}$-based shrinkage problem, we provide a closed-form solution, which can be generally used in various other problems.
	
	We summarize the key contributions of this paper as follows:
	1) We propose a novel NMF model with log-based sparsity constraints. The new model generates sparser solutions to the factorization, which reveals better parts-based representation;
	2) Multiplicative updating rules are developed for efficient optimization;
	3) Regarding the updating rules, we provide theoretical analysis that guarantees the convergence of our algorithm;
	4) We propose a novel $\ell_{2,\log}$-(pseudo) norm to restrict column-wise sparsity. The $\ell_{2,\log}$-(pseudo) norm is invariant.
	Similar to the soft-thresholding problem, we formally provide the $\ell_{2,\log}$-shrinkage operator,
	which is the solution to the $\ell_{2,\log}$-(pseudo) norm associated thresholding problem.
	The $\ell_{2,\log}$-shrinkage operator can be generally used in other problems;
	5) The $\ell_{2,\log}$-shrinkage operator guarantees that the data with noise subtraction are nonnegative at each iteration,
	which ensures the nonnegativity and convergence of the factorization;
	6) Extensive experiments confirm the effectiveness of our method in clustering and data representation. 
	

	The rest of this paper is organized as follows: 
	In \cref{sec_related}, we briefly review some methods that are closely related to our research. 
	Then we introduce our method, including its formulation, optimization, and convergence analysis in \cref{sec_proposed}. 
	To enhance the robustness of the new method to noise effects, we present the robust model in \cref{sec_robust}, including its formulation, optimization, and convergence analysis.
	We conduct extensive experiments and present the detailed results in \cref{sec_exp}. 
	Finally, we conclude the paper in \cref{sec_conclusion}.
	
	{\textbf{Remark}}: In this paper, the proposed log-based sparse approximations do not satisfy the definition of norms. They are named as (pseudo) norms for simplicity of representation.


	\section{Related Work}
	\label{sec_related}
	In this section, we briefly review some closely related works, including the original NMF and graph Laplacian.
	\subsection{Original NMF} 
	Given nonnegative data $X = [x_1,\cdots,x_n]\in\mathcal{R}^{p\times n}$ with $p$ being the dimension and $n$ sample size, NMF is to factor $X$ into $U\in\mathcal{R}^{p\times k}$ (basis) and $V\in\mathcal{R}^{n\times k}$ (coefficients) with the following optimization problem \cite{lee1999learning}:
	\begin{equation}
	\label{eq_nmf}
	\min_{u_{ij}\ge 0,v_{ij}\ge 0} \|X - UV^T \|_F^2,
	\end{equation}
	where $u_{ij}$ and $v_{ij}$ are the $ij$-th elements of $U$ and $V$, respectively, and $k\ll n$ enforces a low-rank approximation of the original data. 
	
	\subsection{Graph Laplacian} 
	To account for nonlinear relationships of data in a mapped low-dimensional space, graph Laplacian is a powerful technique that has been widely used \cite{chung1997spectral}.
	It is defined as  
	\begin{equation}
	\label{eq_manifold}
	\begin{aligned}
	\textbf{Tr}(V^T D V) - \textbf{Tr}(V^T W V)= \textbf{Tr}(V^T L V),
	\end{aligned}
	\end{equation}
	where $W=[w_{ij}]$ is a similarity matrix that measures pair-wise nonlinear similarities of the examples, 
	$D=[d_{ij}]$ is a diagonal matrix with $d_{ii} = \sum_{j}w_{ij}$,
	and $L=D-W$ is the graph Laplacian matrix.

	\section{Log-norm Regularized Sparse NMF}
	\label{sec_proposed}
	In this section, we will present the log-norm regularized sparse NMF model, including its formulation, optimization, and convergence analysis.
	
	\subsection{Formulation}
	For nonnegative data such as images, the parts-based representation has been shown effective in representing their latent structures. 
	This inspires us to develop parts-based representation to represent such data. 
	The NMF methods have been extensively studied for parts-based representation, 
	which are assumed to generate sparse representation due to the fact that they only involve positive combination of the basis.
	However, recent studies show that NMF does not always result in sparse factorization \cite{hoyer2004non,hoyer2002non},
	which implies that NMF does not always succeed in finding good parts-based representation.
	It is crucial for NMF methods to have sparse solutions, since sparser solutions imply better parts-based representation \cite{cai2011graph}, which reveals the underlying true structures of the data.
	{To enhance the sparseness and reveal the nature of parts-based representation, 
		various approaches have been developed in the literature, such as Bayesian NMF \cite{schmidt2009bayesian} and regularization technique \cite{hoyer2004non}.}
	In this paper, we adopt the most widely used approach and impose the sparsity constraints with the $\ell_1$-norm regularization on the basis and representation matrices $U$ and $V$, respectively, which leads to 
	\begin{equation}
	\label{eq_obj_l1}
	\begin{aligned}
	\min_{U,V} &  \|X-UV^T\|_F^2 + \alpha \|U\|_{1} + \beta \|V\|_{1}\\
	s.t.  & \quad u_{ij}\ge 0, v_{ij} \ge 0,
	\end{aligned}
	\end{equation}
	where $\alpha\ge0$ and $\beta\ge0$ are two balancing parameters, and $\|M\|_1=\sum_{ij}|m_{ij}|$ is the $\ell_1$-norm for a matrix $M=[m_{ij}]$. 
	In practice, the $\ell_1$-norm is often used as a tight convex relaxation of the $\ell_0$-norm, whereas the latter is rarely used in practice due to the hard optimization. 
	{
		However, the $\ell_1$ sparsity measure becomes an increasingly bad proxy to the $\ell_0$ norm if any of the elements are large \cite{candes2008enhancing}.
		When the input matrix has large values, the $\ell_1$-norm may approximate the $\ell_0$-norm with significant error, 
		which may lead to inaccurate approximation and sub-optimal solution.
		Although greater values of $\alpha$ and $\beta$ may lead to sparser solutions,
		such solutions do not necessarily have good interpretations for the data.
		To obtain sparsity with a learning model, it is natural to require that the model first approximates the sparsity accurately.
		Unfortunately, the $\ell_1$-norm may be inaccurate in approximating sparsity and more accurate approximation is admirable for sparsity learning.}
	To withdraw this drawback and better restrict the sparsity, we propose to impose the following regularization instead of the $\ell_1$-norm:
	$$\|M\|_{\log} = \sum_{ij} \log(1+|m_{ij}|).$$
	In this paper, we call the above term log-norm ($\ell_{\log}$-(pseudo) norm), which can be considered as a special case in \cite{candes2008enhancing}. Particularly, the $\ell_{\log}$-(pseudo) norm admits the followings properties:
	\begin{itemize}
		\item[$\bullet$] $\|M\|_{\log}\ge0$ always holds.
		\item[$\bullet$] For large $|m_{ij}|$, we have $\log(1+|m_{ij}|)\ll|m_{ij}|$. 
		This reveals that $\|M\|_{\log}\ll\|M\|_1$, implying closer approximation to the true sparsity if a matrix contains large values.
	\end{itemize}
	
	We replace the $\ell_1$-norm with the $\ell_{\log}$-(pseudo) norm in \cref{eq_obj_l1} and obtain the following model:
	\begin{equation}
	\label{eq_obj_log2}
	\begin{aligned}
	\min_{U,V} &  \|X-UV^T\|_F^2 + \alpha \|U\|_{\log} + \beta \|V\|_{\log}\\
	s.t. & \quad u_{ij}\ge 0, v_{ij} \ge 0.
	\end{aligned}
	\end{equation}
	It is seen that the $\ell_{\log}$-(pseudo) norm imposed on the basis and representation matrices renders the model to restrict more accurate sparse constraints, which may lead to sparser solutions. 
	For NMF methods, multiplicative optimization strategies are often adopted.
	With the $\ell_{\log}$-(pseudo) norm regularization, it is seen that it is more challenging to design efficient multiplicative optimization algorithm for \cref{eq_obj_log2}.
	It is noted that model \cref{eq_obj_log2} only considers the linear relationships of data in Euclidean space.
	However, this might be less sufficient since there often exist nonlinear relationships of data, which are omitted in the above model.
	To account for nonlinear structures of the data, we extend the above model and seek the representation on manifold, which leads to the log-norm regularized sparse NMF  (LS-NMF):
	%
	\begin{equation}
	\label{eq_obj_log}
	\begin{aligned}
	\min_{U,V} & \|X-UV^T\|_F^2 + \lambda \textbf{Tr}(V^TLV)   + \alpha \|U\|_{\log} + \beta \|V\|_{\log}\\
	s.t. & \quad u_{ij}\ge 0, v_{ij} \ge 0,
	\end{aligned}
	\end{equation}
	where $\lambda\ge 0$ is a balancing parameter and $L$ is the graph Laplacian matrix.
	It is seen that with the graph Laplacian, the new model \cref{eq_obj_log} enforces the smoothness of the data representation from nonlinear kernel space to the low-dimensional space,
	such that the learned representation $V$ represents nonlinear structures of the data.
	In rest of this section, we will develop an efficient multiplicative updating rule for the optimization of \cref{eq_obj_log} and provide the corresponding convergence analysis.
	We will further extend \cref{eq_obj_log} to its robust version in the next section.

	\subsection{Optimization}
	In this subsection, we will design an efficient optimization algorithm for \cref{eq_obj_log} and present the detailed derivations.
	The objective in \cref{eq_obj_log} is equivalent to
	\begin{equation}
	\begin{aligned}
	\mathcal{O} = & \tr{XX^T} - 2\tr{XVU^T} + \tr{UV^TVU^T} \\
	&+ \lambda \tr{V^TLV} + \alpha \|U\|_{\log} + \beta \|V\|_{\log}.
	\end{aligned}
	\end{equation}
	The Lagrangian function of $\mathcal{O}$ is
	\begin{equation}
	\begin{aligned}
	\mathcal{L} = & \tr{XX^T} - 2\tr{XVU^T} + \tr{UV^TVU^T} \\
	&+ \lambda \tr{V^TLV} + \alpha \|U\|_{\log} + \beta \|V\|_{\log} + \tr{\Psi U^T} + \tr{\Phi V^T},
	\end{aligned}
	\end{equation}
	where $\Psi=[\psi_{ij}]$ and $\Phi=[\phi_{ij}]$ are two matrices with $\psi_{ij}$ and $\phi_{ij}$ being the Lagrangian multipliers of constraints $u_{ij}\ge 0$ and $v_{ij}\ge 0$, respectively.
	We take the partial derivatives w.r.t $U$ and $V$, respectively, and obtain
	\begin{equation}
	\begin{aligned}
	\frac{\partial{\mathcal{L}}}{\partial{U}} &=  -2XV + 2UV^TV + \alpha\textbf{1}_{d\times k}\oslash(\textbf{1}_{d\times k}+U) + \Psi\\
	\frac{\partial{\mathcal{L}}}{\partial{V}} & = -2X^TU + 2VU^TU + 2\lambda LV  + \beta\textbf{1}_{n\times k}\oslash(\textbf{1}_{n\times k}+V) + \Phi
	\end{aligned}
	\end{equation}
	where $\oslash$ is the element-wise division operation of two matrices, $\textbf{1}$ is matrix of 1's with size being clarified in the corresponding subscript.
	Using the Karush-Kuhn-Tucker (KKT) conditions of $\psi_{ij}u_{ij} = 0$ and $\phi_{ij}v_{ij} = 0$, we have
	\begin{equation}
	\label{eq_deriv_u}
	\begin{aligned}
	&  -2(XV)_{ij}u_{ij} + 2(UV^TV)_{ij}u_{ij}  + \alpha (\textbf{1}_{d\times k}\oslash(\textbf{1}_{d\times k}+U))_{ij}u_{ij} + \psi_{ij}u_{ij} \\
	= &  -2(XV)_{ij}u_{ij} + 2(UV^TV)_{ij}u_{ij}  + \alpha\frac{u_{ij}}{1+u_{ij}} =0,
	\end{aligned}
	\end{equation}
	\begin{equation}
	\label{eq_deriv_v}
	\begin{aligned}
	& -2(X^TU)_{ij}v_{ij} + 2(VU^TU)_{ij}v_{ij} + 2\lambda (LV)_{ij}v_{ij} 
	+ \beta(\textbf{1}_{n\times k}\oslash(\textbf{1}_{n\times k}+V))_{ij}v_{ij} + \phi_{ij}v_{ij} \\
	= & -2(X^TU)_{ij}v_{ij} + 2(VU^TU)_{ij}v_{ij} + 2\lambda (DV)_{ij}v_{ij} - 2\lambda (WV)_{ij}v_{ij}+ \beta\frac{v_{ij}}{1+v_{ij}}  =0.
	\end{aligned}
	\end{equation}
	The above equations \cref{eq_deriv_u,eq_deriv_v} lead to the following updating rules of the proposed model:
	\begin{align}
	u_{ij} & \leftarrow u_{ij} \frac{(XV)_{ij}}{(UV^TV)_{ij}  + \alpha\frac{1}{2(1+u_{ij})} } \\
	v_{ij} & \leftarrow v_{ij} \frac{(X^TU)_{ij}+ \lambda (WV)_{ij}}{(VU^TU)_{ij} + \lambda (DV)_{ij} + \beta\frac{1}{2(1+v_{ij})}},
	\end{align}
	or equivalently
	\begin{align}
	\label{eq_update_u}
	u_{ij} & \leftarrow u_{ij} \frac{(2XV)_{ij}}{(2UV^TV  + \alpha\textbf{1}_{d\times k}\oslash(\textbf{1}_{d\times k}+U) )_{ij} } \\
	\label{eq_update_v}
	v_{ij} & \leftarrow \! v_{ij}\! \frac{ 2(X^TU + \lambda WV)_{ij}}
	{(2VU^TU \!+\! 2\!\lambda \!DV \!+\! \beta \textbf{1}_{n\times k} \!\oslash\! (\textbf{1}_{n\times k} \!+\! V))_{ij}}.
	\end{align}
	For the convergence of the above updating rules, we will provide the detailed theoretical analysis in the following subsection.
	It is noted that the detailed derivations of optimization and analysis of convergence for \cref{eq_obj_log} are crucial because they are necessary for the robust model in the next section. 
	
	

	\subsection{Convergence Analysis}
	\label{sec_conv_sl}
	In this section, we theoretically analyze the convergence of the updating rules provided in \cref{eq_update_u,eq_update_v}.
	Regarding the updating rules, we have the following theorem.
	\begin{theorem}
		\label{thm_conv}
		The objective $\mathcal{O}$ in \cref{eq_obj_log} is non-increasing under the updates in \cref{eq_update_u,eq_update_v}.
		The objective function is invariant under these updates if and only if $U$ and $V$ are at a stationary point.
	\end{theorem}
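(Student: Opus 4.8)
The plan is to use the auxiliary function technique of Lee and Seung. Recall that $G(h,h')$ is an auxiliary function for $\phi(h)$ if $G(h,h')\ge\phi(h)$ for all $h,h'$ and $G(h,h)=\phi(h)$; then the update $h^{(t+1)}=\argmin_h G(h,h^{(t)})$ forces $\phi(h^{(t+1)})\le G(h^{(t+1)},h^{(t)})\le G(h^{(t)},h^{(t)})=\phi(h^{(t)})$, and equality throughout holds exactly when $h^{(t)}$ already minimizes $G(\cdot,h^{(t)})$. Since \cref{eq_update_u} updates $U$ with $V$ held fixed and \cref{eq_update_v} updates $V$ with $U$ held fixed, it suffices to exhibit such auxiliary functions for $\mathcal{O}$ viewed, in turn, as a function of $U$ alone and of $V$ alone, and to verify that the stated multiplicative rules are precisely their minimizers.

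First I would treat the update of $U$. With $V$ fixed, the $U$-dependent part of $\mathcal{O}$ is $F(U)=\mathrm{Tr}(UV^TVU^T)-2\,\mathrm{Tr}(XVU^T)+\alpha\|U\|_{\log}$, which separates over the rows of $U$. For the quadratic part I would use the classical Lee--Seung surrogate: replace $V^TV$ by the diagonal matrix with $j$-th entry $(U^{(t)}V^TV)_{ij}/u^{(t)}_{ij}$, whose excess over $V^TV$ is positive semidefinite, giving a quadratic upper bound tight at $U^{(t)}$. The new ingredient is a surrogate for $\alpha\log(1+u_{ij})$. Instead of the tangent-line bound (which, when minimized, would place the $\alpha$-term in the numerator and could break nonnegativity), I would use the quadratic surrogate
\[
g(u,u')\;=\;\frac{u^2}{2u'(1+u')}\;-\;\frac{u'}{2(1+u')}\;+\;\log(1+u'),
\]
which satisfies $g(u',u')=\log(1+u')$ and $g(u,u')\ge\log(1+u)$ for all $u\ge 0$: the difference $g(u,u')-\log(1+u)$ vanishes together with its first derivative at $u=u'$ and is convex in $u$ (second derivative $\tfrac{1}{u'(1+u')}+\tfrac{1}{(1+u)^2}>0$), so $u=u'$ is its global minimizer with value $0$. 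Summing the quadratic surrogate and $\alpha\sum_j g(u_{ij},u^{(t)}_{ij})$ over a row and setting the gradient to zero reproduces exactly \cref{eq_update_u}; the chain of inequalities above then gives $F(U^{(t+1)})\le F(U^{(t)})$, with equality iff $U^{(t)}$ is a stationary point of $F$.

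For the update of $V$, I would proceed analogously with the $V$-dependent part $F(V)=\mathrm{Tr}(VU^TUV^T)-2\,\mathrm{Tr}(X^TUV^T)+\lambda\,\mathrm{Tr}(V^TLV)+\beta\|V\|_{\log}$. The reconstruction quadratic and $\lambda\,\mathrm{Tr}(V^TDV)$ are bounded above by Lee--Seung diagonal surrogates; the indefinite term $-\lambda\,\mathrm{Tr}(V^TWV)$ is bounded above by the standard GNMF device $z\ge 1+\log z$ applied to the cross terms $v_{jk}v_{\ell k}/(v^{(t)}_{jk}v^{(t)}_{\ell k})$ (so that $W$ enters through $WV^{(t)}$, matching the numerator of \cref{eq_update_v}); and $\beta\log(1+v_{ij})$ is bounded by the same surrogate $g$. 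Combining and minimizing gives \cref{eq_update_v}, hence $F(V^{(t+1)})\le F(V^{(t)})$, and alternating the two updates shows $\mathcal{O}$ is non-increasing. For the ``invariant iff stationary'' claim: a point is left unchanged by \cref{eq_update_u}--\cref{eq_update_v} exactly when, for every entry, the multiplicative factor equals one, which is exactly the KKT system \cref{eq_deriv_u}--\cref{eq_deriv_v} (the componentwise stationarity conditions multiplied by $u_{ij}$ and $v_{ij}$); equivalently, $(U,V)$ is a stationary point of \cref{eq_obj_log}, and by the auxiliary-function argument this is precisely when the objective is invariant.

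I expect the main obstacle to be designing the surrogate for the log penalty so that it is simultaneously (i) a genuine tight upper bound on $\log(1+\cdot)$ and (ii) such that its minimizer, combined with the Lee--Seung quadratic surrogate, yields precisely the denominator form $\alpha/(2(1+u_{ij}))$ appearing in \cref{eq_update_u}--\cref{eq_update_v} while keeping the iterates nonnegative; the naive tangent-line bound fails criterion (ii). The bookkeeping for the graph-Laplacian term in the $V$-update is routine but tedious, and the zero-entry edge cases in the ``iff'' statement require the usual care.
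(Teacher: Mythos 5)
Your overall strategy is the paper's: an auxiliary-function (majorize--minimize) argument in the Lee--Seung style, with the updates \cref{eq_update_u,eq_update_v} obtained as exact minimizers of quadratic surrogates and monotonicity following from $F(v^{(t+1)})\le G(v^{(t+1)},v^{(t)})\le G(v^{(t)},v^{(t)})=F(v^{(t)})$. Your handling of the log penalty is actually tighter than the paper's: you exhibit a verified convex quadratic majorizer $g(u,u')$ of $\log(1+u)$ whose derivative $u/(u'(1+u'))$ produces exactly the $\alpha/(2(1+u_{ij}))$-type term in the denominator, whereas the paper compares its surrogate against the second-order Taylor polynomial of $F_{ab}$ as though that polynomial equalled $F_{ab}$ (for the log term it does not; the paper's conclusion survives only because $\log(1+v)$ lies below its tangent line, so the quadratic correction is not even needed). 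Your $U$-update derivation checks out and reproduces \cref{eq_update_u}.

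The one concrete misstep is your surrogate for $-\lambda\,\mathrm{Tr}(V^TWV)$. Applying $z\ge 1+\log z$ to $v_{jk}v_{\ell k}/(v^{(t)}_{jk}v^{(t)}_{\ell k})$ produces a majorizer containing $-2\lambda (WV^{(t)})_{jk}\,v^{(t)}_{jk}\log v_{jk}$, whose derivative contributes a term proportional to $1/v_{jk}$. Setting the gradient of the combined surrogate to zero then gives a quadratic equation $a v^2 - bv - c=0$ with positive root $v=\bigl(b+\sqrt{b^2+4ac}\bigr)/(2a)$, i.e.\ a square-root-type update of the kind that appears in symmetric/orthogonal NMF derivations --- not the multiplicative rule \cref{eq_update_v}, in which $WV^{(t)}$ enters the numerator linearly. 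To land on \cref{eq_update_v} you must instead majorize the $W$ cross-term by its tangent line (legitimate, since it is concave in each entry with the others held fixed), or equivalently follow the paper's route of a single elementwise quadratic surrogate with curvature $\bigl[(VU^TU)_{ab}+\lambda(DV)_{ab}+\beta/(2(1+v^{(t)}_{ab}))\bigr]/v^{(t)}_{ab}$, shown to dominate the true curvature via $(DV)_{ab}\ge D_{aa}v^{(t)}_{ab}\ge L_{aa}v^{(t)}_{ab}$. With that single substitution your argument goes through; the ``invariant iff stationary'' part is handled at the same level of rigor as the paper (both leave the $v_{ab}=0$ fixed-point edge case implicit).
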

	
	In the following, we will provide the proof regarding the updates of $U$ and $V$, respectively.
	To begin the proof, we need to introduce the definition of auxiliary function, which is described below.
	\begin{definition}
		\label{def_aux}
		For the functions $G(v,v')$ and $F(v)$, if the following conditions
		$$G(v,v')\ge F(v),\quad G(v,v) = F(v)$$ are satisfied, then $G(v,v')$ is an auxiliary function of $F(v)$.
	\end{definition}
	Regarding auxiliary function, it has a useful property, which is provided in the following lemma.
	\begin{lemma}[\cite{cai2011graph}]
		\label{lemma_aux}
		If $G(v,v')$ is an auxiliary function of $F(v)$, then $F(v)$ is non-increasing under the updating rule of
		\begin{equation}
		\label{eq_aux_min}
		v^{(t+1)} = \argmin_{v} G(v,v^{(t)}).
		\end{equation}
	\end{lemma}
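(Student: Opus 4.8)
The plan is to establish the non-increasing property through a short chain of inequalities that links the values of $F$ at two consecutive iterates by passing through the auxiliary function $G$. The whole argument rests on combining the two defining conditions of an auxiliary function from \cref{def_aux} with the optimality of $v^{(t+1)}$ as the minimizer of $G(\cdot,v^{(t)})$; no computation beyond ordering three elementary relations is needed.

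First I would exploit the update rule \cref{eq_aux_min}. Since $v^{(t+1)} = \argmin_{v} G(v,v^{(t)})$, the value $G(v^{(t+1)},v^{(t)})$ is no larger than $G(v,v^{(t)})$ for every admissible $v$; specializing to $v = v^{(t)}$ yields $G(v^{(t+1)},v^{(t)}) \le G(v^{(t)},v^{(t)})$. Next I would bring in the two properties from \cref{def_aux}. The upper-bound condition $G(v,v')\ge F(v)$, evaluated at $v = v^{(t+1)}$ and $v' = v^{(t)}$, gives $F(v^{(t+1)}) \le G(v^{(t+1)},v^{(t)})$; the tightness condition $G(v,v) = F(v)$, evaluated at $v = v^{(t)}$, gives $G(v^{(t)},v^{(t)}) = F(v^{(t)})$. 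Concatenating these three facts produces $F(v^{(t+1)}) \le G(v^{(t+1)},v^{(t)}) \le G(v^{(t)},v^{(t)}) = F(v^{(t)})$, which is precisely the asserted monotonicity.

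There is no genuine obstacle in this proof, as it is a purely formal consequence of the definition of an auxiliary function; the only point deserving care is to apply the minimization inequality in the correct direction and with the second argument held fixed at $v^{(t)}$, so that both arguments do not vary at once. It is this asymmetric treatment of the two slots of $G$ that makes the chain close up at $F(v^{(t)})$ rather than at some unrelated quantity.
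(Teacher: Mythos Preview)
Your proof is correct and follows exactly the same approach as the paper: both establish the monotonicity via the chain $F(v^{(t+1)}) \le G(v^{(t+1)},v^{(t)}) \le G(v^{(t)},v^{(t)}) = F(v^{(t)})$, using the minimizer property for the middle inequality and the two defining conditions of an auxiliary function for the outer relations. Your write-up simply spells out each step more explicitly than the paper's one-line display.
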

	\begin{proof}
		The statement is easily seen according to the following chain of inequality:
		\begin{equation}
		F(v^{(t+1)}) \le G(v^{(t+1)},v^{(t)}) \le G(v^{(t)},v^{(t)}) = F(v^{(t)}).
		\end{equation}
	\end{proof}
	It is seen that \cref{lemma_aux} guarantees that the objective function value of $F(v)$ non-increasing if the updating rule of \cref{eq_aux_min} is adopted.
	Next, we will show that the updates of \cref{eq_update_u,eq_update_v} are exactly the updating rule of \cref{eq_aux_min} with proper auxiliary functions of $U$ and $V$, respectively.
	
	Since the updating rules are essentially performed in element-wise manner,
	it is sufficient to show that the objective function is non-increasing with respect to each element of the matrix variable.
	In the following, we first consider the updating of $V$. 
	For $V$, we denote each of its elements by $v_{ab}$.
	Correspondingly, we use $F_{ab}$ to denote the $v_{ab}$-associated part in $\mathcal{O}$.
	Then it is straightforward to obtain the first and second derivatives of $F_{ab}$ with respect to $v_{ab}$ as follows:
	\begin{align}
	F'_{ab} = & (-2X^TU + 2VU^TU + 2\lambda LV)_{ab} + \frac{\beta}{1+v_{ab}}, \\
	F''_{ab} = & 2(U^TU)_{bb} + 2\lambda L_{aa} - \frac{\beta}{(1+v_{ab})^2}.
	\end{align}
	Then, we formally have the following lemma, which defines an auxiliary function for $F_{ab}$.
	
	\begin{lemma}
		\label{lemma_proof_v_G}
		The function
		\begin{equation}
		\label{eq_proof_v_G}
		\begin{aligned}
		G(v,v_{ab}^{(t)}) = & F_{ab}(v_{ab}^{(t)}) + F'_{ab}(v_{ab}^{(t)})(v-v_{ab}^{(t)}) \\
		&  + \frac{(VU^TU)_{ab} + \lambda (DV)_{ab} + \frac{\beta}{2(1+v_{ab}^{(t)})} }{v_{ab}^{(t)}} (v-v_{ab}^{(t)})^2
		\end{aligned}
		\end{equation}
		is an auxiliary function for $F_{ab}(v)$.
	\end{lemma}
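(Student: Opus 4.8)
The plan is to verify directly the two defining properties of an auxiliary function in \cref{def_aux}. The boundary equality $G(v_{ab}^{(t)},v_{ab}^{(t)}) = F_{ab}(v_{ab}^{(t)})$ is immediate, since both the linear term $F'_{ab}(v_{ab}^{(t)})(v-v_{ab}^{(t)})$ and the quadratic correction term in \cref{eq_proof_v_G} vanish at $v=v_{ab}^{(t)}$. All the work therefore lies in establishing the majorization $G(v,v_{ab}^{(t)}) \ge F_{ab}(v)$ for every admissible $v\ge 0$.

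First I would record the exact structure of $F_{ab}$. Collecting the terms of $\mathcal{O}$ that involve $v_{ab}$, one sees $F_{ab}(v) = p(v) + \beta\log(1+v)$, where $p$ is a quadratic polynomial whose leading coefficient equals half the second derivative of the polynomial part, namely $(U^TU)_{bb} + \lambda L_{aa}$. Because $p$ is quadratic, its second-order Taylor expansion about $v_{ab}^{(t)}$ is \emph{exact}: $p(v) = p(v_{ab}^{(t)}) + p'(v_{ab}^{(t)})(v-v_{ab}^{(t)}) + \big((U^TU)_{bb}+\lambda L_{aa}\big)(v-v_{ab}^{(t)})^2$. Adding the log term and using $F'_{ab}(v_{ab}^{(t)}) = p'(v_{ab}^{(t)}) + \tfrac{\beta}{1+v_{ab}^{(t)}}$ gives $F_{ab}(v) = F_{ab}(v_{ab}^{(t)}) + F'_{ab}(v_{ab}^{(t)})(v-v_{ab}^{(t)}) + \big((U^TU)_{bb}+\lambda L_{aa}\big)(v-v_{ab}^{(t)})^2 + R(v)$, where $R(v) = \beta\big[\log(1+v) - \log(1+v_{ab}^{(t)}) - \tfrac{1}{1+v_{ab}^{(t)}}(v-v_{ab}^{(t)})\big]$ is the Taylor remainder of the log term. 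Since $t\mapsto\log(1+t)$ is concave it lies below its tangent line, so $R(v)\le 0$ for all $v\ge 0$; this is the key observation that lets the nonconvex log regularizer be absorbed cleanly.

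Subtracting, $G(v,v_{ab}^{(t)}) - F_{ab}(v) = \Big[\tfrac{(VU^TU)_{ab} + \lambda(DV)_{ab} + \frac{\beta}{2(1+v_{ab}^{(t)})}}{v_{ab}^{(t)}} - (U^TU)_{bb} - \lambda L_{aa}\Big](v-v_{ab}^{(t)})^2 - R(v)$, where $V$ denotes the current iterate. Since $-R(v)\ge 0$ and $(v-v_{ab}^{(t)})^2\ge 0$, it suffices to show the bracketed coefficient is nonnegative. I would bound its contributions separately: $(VU^TU)_{ab} = \sum_c v_{ac}(U^TU)_{cb} \ge v_{ab}(U^TU)_{bb}$ because $V\ge 0$ and $U^TU$ has nonnegative entries; $(DV)_{ab} = D_{aa}v_{ab} \ge L_{aa}v_{ab}$ because $D_{aa}-L_{aa} = W_{aa}\ge 0$; and $\tfrac{\beta}{2(1+v_{ab}^{(t)})}\ge 0$. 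Dividing by $v_{ab}^{(t)}>0$ and adding yields $\tfrac{(VU^TU)_{ab} + \lambda(DV)_{ab} + \frac{\beta}{2(1+v_{ab}^{(t)})}}{v_{ab}^{(t)}} \ge (U^TU)_{bb} + \lambda L_{aa}$, which completes the proof.

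The main obstacle I anticipate is not any single estimate but handling the nonconvex log term: a naive second-order Taylor expansion of the full $F_{ab}$ fails because $F''_{ab}$ contains the negative quantity $-\beta/(1+v_{ab})^2$, so one cannot simply replace $F_{ab}$ by a quadratic upper bound in the classical Lee--Seung style. Splitting off $\beta\log(1+v)$ and exploiting its concavity, so that its Taylor remainder has a favorable sign and can be discarded, is precisely what makes the quadratic majorizer valid. A minor point to address is the positivity $v_{ab}^{(t)}>0$ needed for the division, which holds because the multiplicative update \cref{eq_update_v} preserves positivity from a positive initialization (or by a standard limiting argument when an entry is zero).
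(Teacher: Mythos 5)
Your proof is correct, and it follows the same overall strategy as the paper's — verify $G(v,v)=F_{ab}(v)$, then show the quadratic coefficient of $G$ dominates the curvature of $F_{ab}$ using exactly the two inequalities $(VU^TU)_{ab}\ge v_{ab}^{(t)}(U^TU)_{bb}$ and $\lambda(DV)_{ab}\ge \lambda L_{aa}v_{ab}^{(t)}$ — but it differs in one substantive step, and the difference is in your favor. The paper writes $F_{ab}(v)$ as \emph{equal} to its second-order Taylor polynomial in \cref{eq_proof_v_talor} and then compares coefficients of $(v-v_{ab}^{(t)})^2$; that identity is exact only for the quadratic part of $F_{ab}$, and the remainder contributed by the non-polynomial term $\beta\log(1+v)$ is silently dropped, so the comparison of coefficients does not by itself yield $G\ge F_{ab}$ pointwise. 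Your decomposition $F_{ab}(v)=p(v)+\beta\log(1+v)$, with the Taylor expansion exact for the quadratic $p$ and the log term bounded above by its tangent line via concavity (so the remainder $R(v)\le 0$ can be discarded), closes this gap and turns the paper's coefficient inequality — which the paper in fact proves in the stronger form without the $-\beta/(2(1+v_{ab}^{(t)})^2)$ term — into a complete majorization argument. An equivalent repair would invoke the Lagrange form of the remainder, noting $\tfrac12 F''_{ab}(\zeta)\le (U^TU)_{bb}+\lambda L_{aa}$ for any intermediate point $\zeta\ge 0$, but your tangent-line argument is cleaner. Your closing remark that $v_{ab}^{(t)}>0$ is needed for the division is also a point the paper leaves implicit; it is worth stating, as you do, that positivity is preserved by the multiplicative updates from a positive initialization.
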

	\begin{proof}
		It is easy to check the second condition for auxiliary function, which is seen as below:
		\begin{equation}
		\begin{aligned}
		G(v,v) =&  F_{ab}(v) + F'_{ab}(v)(v-v) \\
		& + \frac{(VU^TU + \lambda DV)_{ab} + \frac{\beta}{2(1+v)}}{v} (v-v)^2 \\
		= & F_{ab}(v).
		\end{aligned}
		\end{equation}
		To show that the first condition, i.e., $G(v,v_{ab}^{(t)}) \ge F_{ab}(v)$ holds, we compare $G(v,v_{ab}^{(t)})$ with the Tylor expansion series of $F_{ab}(v)$, 
		which is expanded as follows:
		\begin{equation}
		\label{eq_proof_v_talor}
		\begin{aligned}
		F_{ab}(v)  = & F_{ab}(v_{ab}^{(t)}) + F'_{ab}(v_{ab}^{(t)})(v-v_{ab}^{(t)}) \\
		&  + \left[(U^TU)_{bb} + \lambda L_{aa} - \frac{\beta}{2(1+v_{ab}^{(t)})^2}\right] (v-v_{ab}^{(t)})^2.
		\end{aligned}
		\end{equation}
		Thus, to show $G(v,v_{ab}^{(t)}) \ge F_{ab}(v)$, we only need to show $\cref{eq_proof_v_G} \ge \cref{eq_proof_v_talor}$, which is equivalent to show
		\begin{equation}
		\label{eq_proof_v_aux}
		\begin{aligned}
		&\frac{(VU^TU)_{ab} + \lambda (DV)_{ab} + \frac{\beta}{2(1+v_{ab}^{(t)})} }{v_{ab}^{(t)}}  
		\ge &(U^TU)_{bb} + \lambda L_{aa} - \frac{\beta}{2(1+v_{ab}^{(t)})^2}.
		\end{aligned}
		\end{equation}
		With straightforward algebra, it is easy to see that
		\begin{equation}
		(VU^TU)_{ab} = \sum_{l=1}^{k} v_{al}^{(t)} (U^TU)_{lb} \ge v_{ab}^{(t)}(U^TU)_{bb},
		\end{equation}
		and
		\begin{equation}
		\begin{aligned}
		\lambda (DV)_{ab} =&  \lambda \sum_{l=1}^{n} D_{al} v_{lb}^{(t)} \ge \lambda D_{aa}v_{ab}^{(t)} \\
		\ge & \lambda (D-W)_{aa}v_{ab}^{(t)} = \lambda L_{aa} v_{ab}^{(t)}.
		\end{aligned}
		\end{equation}
		Thus,
		\begin{equation}
		\begin{aligned}
		&\frac{(VU^TU)_{ab} + \lambda (DV)_{ab} + \frac{\beta}{2(1+v_{ab}^{(t)})} }{v_{ab}^{(t)}} \\
		\ge &\frac{(VU^TU)_{ab} + \lambda (DV)_{ab}  }{v_{ab}^{(t)}} \\
		\ge & (U^TU)_{bb} + \lambda L_{aa} \\
		\ge & (U^TU)_{bb} + \lambda L_{aa} - \frac{\beta}{2(1+v_{ab}^{(t)})^2}.
		\end{aligned}
		\end{equation}
		Thus, $G(v,v_{ab}^{(t)})$ is an auxiliary function of $F_{ab}(v)$.
	\end{proof}
	Next, with the above \cref{def_aux,lemma_aux,lemma_proof_v_G}, we will prove \cref{thm_conv} in the following.
	\begin{proof}[Proof of \cref{thm_conv}]
		

		To obtain $v_{ab}^{(t+1)}$, we need to solve the following problem
		\begin{equation}
		\label{eq_proof_aux_solve}
		v_{ab}^{(t+1)} = \argmin_{v} G(v,v_{ab}^{(t)}).
		\end{equation}
		It is seen that $G(v,v_{ab}^{(t)})$ defined in \cref{eq_proof_v_G} is quadratic and convex.
		Thus, \cref{eq_proof_aux_solve} admits solution with first-order optimal condition:
		\begin{equation}
		\label{eq_proof_aux_cond}
		\begin{aligned}
		\!& F'_{ab}(v_{ab}^{(t)})  + 2\frac{(VU^TU)_{ab} \!+\! \lambda (DV)_{ab} \!+\! \frac{\beta}{2(1+v_{ab}^{(t)})} }{v_{ab}^{(t)}} (v-v_{ab}^{(t)})=0.
		\end{aligned}
		\end{equation}
		It is seen that\cref{eq_proof_aux_cond} leads to
		\begin{eqnarray}
		\label{eq_proof_aux_cond2}
		\begin{aligned}
		&  2\frac{(VU^TU)_{ab} + \lambda (DV)_{ab} + \frac{\beta}{2(1+v_{ab}^{(t)})} }{v_{ab}^{(t)}} \cdot v  \\
		= & 2{(VU^TU)_{ab} + 2\lambda (DV)_{ab} + \frac{\beta}{(1+v_{ab}^{(t)})} } - F'_{ab}(v_{ab}^{(t)})
		\end{aligned}
		\end{eqnarray}
		Hence,
		\begin{equation}
		\label{eq_proof_aux_cond3}
		\begin{aligned}
		v_{ab}^{(t+1)}	= & v_{ab}^{(t)} \!-\! F'_{ab}(v_{ab}^{(t)}) \frac{v_{ab}^{(t)}}{ 2(VU^TU)_{ab} \!+\! 2\lambda (DV)_{ab} \!+\! \frac{\beta}{(1+v_{ab}^{(t)})} } \\
		= & v_{ab}^{(t)}\frac{2(VU^TU)_{ab} \!+\! 2\lambda (DV)_{ab} \!+\! \frac{\beta}{(1+v_{ab}^{(t)})} 
			\!-\! F'_{ab}(v_{ab}^{(t)}) }{ 2(VU^TU)_{ab} \!+\! 2\lambda (DV)_{ab} \!+\! \frac{\beta}{(1+v_{ab}^{(t)})} } \\
		= & v_{ab}^{(t)}\frac{ 2(X^TU)_{ab} + 2\lambda (WV)_{ab} }{ 2(VU^TU)_{ab} + 2\lambda (DV)_{ab} + \frac{\beta}{(1+v_{ab}^{(t)})} },
		\end{aligned}
		\end{equation}
		which essentially results in the updating rule of \cref{eq_update_v}.
		Since $G(v,v_{ab}^{(t)})$ is an auxiliary function for $F_{ab}(v)$, \cref{eq_proof_aux_cond3} guarantees the non-increasing property of $F_{ab}(v)$.
		Hence, the objective $\mathcal{O}$ is non-increasing under the update rule of \cref{eq_update_v}.
		
		Mathematically, the matrices $U$ and $V$ are playing similar rules in the model and thus the proof regarding \cref{eq_update_u} follows \cref{eq_update_v}.
		We only need to replace $X$ with $X^T$ and set $\lambda=0$, then the above analysis applies to \cref{eq_update_u}, which concludes the proof.

	\end{proof}
	
	{
		\begin{rem}
			In the above analysis, it is been shown that the value of objective function is decreasing with the alternative updating rules of $U$ and $V$.
			We define $\varUpsilon = [U^T,V^T]^T\in\mathcal{R}^{(d+n) \times k}$ and treat the updates of \cref{eq_update_u} and \cref{eq_update_v} as a mapping $\varUpsilon^{(t+1)} = \mathcal{M}(\varUpsilon^{(t)})$.
			Then, clearly we have $\varUpsilon^* = \mathcal{M}(\varUpsilon^*)$ at convergence.
			Following \cite{ding2010convex,xu1996convergence}, with non-negativity constraint enforced, we expand $\varUpsilon \approxeq \mathcal{M}(\varUpsilon^*) + (\partial \mathcal{M} / \partial \varUpsilon)(\varUpsilon - \varUpsilon^*)$, which indicates that $\|\varUpsilon^{(t+1)} -\varUpsilon^*\|\le \|\partial \mathcal{M} / \partial \varUpsilon\| \cdot \|\varUpsilon^{(t)} -\varUpsilon^*\|$ under an appropriate matrix norm. 
			In fact, $\|\partial \mathcal{M} / \partial \varUpsilon\|\not= 0$ generally holds. Thus, \cref{eq_update_u} and \cref{eq_update_v} roughly have a first-order convergence rate.
			
		\end{rem}
	}

	\section{Robust Log-norm Regularized Sparse NMF}
	\label{sec_robust}
	In this section, we further develop a robust model based on the LS-NMF, which is named robust log-norm regularized sparse NMF model (RLS-NMF).
	In particular, with the fundamentals of the LS-NMF in \cref{sec_proposed}, we will present the detailed formulation, optimization, and convergence analysis for RLS-NMF in this section.
	\subsection{Formulation of RLS-NMF}
	It is noted that the LS-NMF model raised in \cref{sec_proposed} seeks the nonnegative representation with original data.
	Unfortunately, data are often observed and collected with noise, which severely degrades the learning performance. 
	Thus, there is a demanding need to develop more robust model to handle noise effects and promote the learning performance. 
	To enhance the robustness, we adopt the more robust measure $\ell_{2,1}$-norm to minimize the residual instead of the Frobenius-norm, which leads to
	\begin{equation}
	\label{eq_obj_log_robust1}
	\begin{aligned}
	& \min_{U,V}  \|X-UV^T\|_{2,1} + \lambda \textbf{Tr}(V^TLV)  + \alpha \|U\|_{\log} + \beta \|V\|_{\log} \\
	& \quad s.t. \quad u_{ij}\ge 0, v_{ij} \ge 0,
	\end{aligned}
	\end{equation}
	where, for a matrix $M$, $\|M\|_{2,1}=\sum_{j} \|m_j\|_2$ is the $\ell_{2,1}$-norm with column-wise sparsity.
	Here, the $\ell_{2,1}$-norm is invariant and helps keep spatial information of the examples.
	However, the optimization of $\ell_{2,1}$ with nonnegative constraints is difficult.
	To facilitate the optimization, we further decompose the data as $X=UV^T+S$, where the matrix $S$ with column-wise sparsity is introduced to account for the noise.
	With the above assumption, we relax model \cref{eq_obj_log_robust1} to the following
	\begin{equation}
	\label{eq_obj_log_robust2}
	\begin{aligned}
	\min_{U,V,S} &  \|X-S-UV^T\|_F^2 + \gamma \|S\|_{2,1}  \\
	&+ \lambda \textbf{Tr}(V^TLV) + \alpha \|U\|_{\log} + \beta \|V\|_{\log}\\
	s.t. & \quad u_{ij}\ge 0, v_{ij} \ge 0,
	\end{aligned}
	\end{equation}
	where $\gamma\ge0$ is a balancing parameter.
	It is seen that the relaxed model \cref{eq_obj_log_robust2} is easier to solve and the balancing parameter $\gamma$ allows the model to have more freedom.
	It is noted that the $\ell_{2,1}$-norm is defined as the summation of $\ell_2$-norms of all column vectors in a matrix,
	where the summation actually performs in a way similar to $\ell_1$-norm.
	As pointed out in earlier section that the $\ell_1$-norm might be less efficient in approximating the true sparsity,
	we design the following novel $\ell_{2,\log}$-(pseudo) norm to better restrict column-wise sparsity:
	\begin{equation}
	\label{eq_l2log}
	\|M\|_{2,\log} = \sum_{j} \log(1 + \|m_j\|_2).
	\end{equation}
	{
		For any type of noise, the expectation of the sparsity measurement by \cref{eq_l2log} is less than the $\ell_{2,1}$-based measurement.
		Our explanation of the above statement is as follows. 
		Let $c$ be a column of $S$ and we denote the elements of $c$ by $c_1, c_2, \cdots, c_d.$
		For any types of distribution of $c_i$ for $i=1,\cdots,d$, 
		the expectation of the log-based approximation is generally less than the $\ell_2$-based approximation.
		Let $\mathbf{E}(\cdot)$ be the expectation and $f_{\sum_{i=1}^d c_{i}^2}(y)$ be the probability density function for $y=\sum_{i=1}^d c_i^2$,
		then the above conclusion can be formally analyzed in the following way:
		\begin{equation}
		\begin{aligned}
		& \mathbf{E} \Bigg(\log \Bigg(1 + \sqrt{\sum\nolimits_{i=1}^d c_{i}^2}\Bigg) \Bigg) =  \int_{0}^{+\infty} \log  (1 + \sqrt{ y } ) f_{\sum_{i=1}^d c_{i}^2}(y) \dd{y} \\
		& < \int_{0}^{+\infty} \sqrt{y} f_{\sum_{i=1}^d c_{i}^2}(y)  \dd{y} = \mathbf{E} \Bigg(\sqrt{\sum\nolimits_{i=1}^d c_{i}^2} \quad \Bigg).
		\end{aligned}
		\end{equation}
		The above inequality generally holds for all columns of $S$, i.e., $s_1,\cdots,s_n$, thus it is straightforward that
		\begin{equation}
		\begin{aligned}
		\mathbf{E} \Big( \|S\|_{2,\log} \Big) & = \mathbf{E} \Big( \sum\nolimits_{i=1}^{n} \log(1 + \|s_i\|_2) \Big) = \sum\nolimits_{i=1}^{n} \mathbf{E}(\log(1 + \|s_i\|_2))  \\
		& < \sum\nolimits_{i=1}^{n} \mathbf{E}( \|s_i\|_2) = \mathbf{E} \Big(\sum\nolimits_{i=1}^{n} \|s_i\|_2 \Big) = \mathbf{E}(\|S\|_{2,1}). 
		\end{aligned}
		\end{equation}
		%
		Moreover, if $s_i$ only contains essentially small values, then it is natural that $s_i$ contains noise and is indeed sparse.
		Thus, for such a column of $S$, it is essentially important that the approximation is close to 0 rather than 1 to distinguish noise effects and useful information.
		It is noted that $\log(1+\sqrt{x})<\sqrt{x}$ holds for small $x>0$,
		which indicates that the log-based approximation is closer to 0 than the $\ell_2$-based approach and thus is more accurate in approximating the real sparsity. 
		Thus, it is expected that the log-based approximation is more accurate in approximating the real sparse indicator of the columns than the $\ell_2$-based approach.	
		
	}
	
	We incorporate the $\ell_{2,\log}$-(pseudo) norm into \cref{eq_obj_log_robust2}, which leads to the robust log-norm regularized sparse NMF model (RLS-NMF):
	\begin{equation}
	\label{eq_obj_log_robust}
	\begin{aligned}
	\min_{U,V,S} &  \|X-S-UV^T\|_F^2 + \gamma \|S\|_{2,\log}  \\
	&+ \lambda \textbf{Tr}(V^TLV) + \alpha \|U\|_{\log} + \beta \|V\|_{\log}\\
	s.t. & \quad u_{ij}\ge 0, v_{ij} \ge 0.
	\end{aligned}
	\end{equation}
	For the optimization and convergence analysis of the RLS-NMF model in \cref{eq_obj_log_robust}, we will present them in details in rest of this section.
	
	%
	%
	%
	
	\subsection{Optimization of RLS-NMF}
	For $S$-minimization, the sub-problem is
	\begin{equation}
	\label{eq_sub_s}
	\begin{aligned}
	\min_{S} &  \|X - S - UV^T\|_F^2 + \gamma \|S\|_{2,\log}.
	\end{aligned}
	\end{equation}
	We formally provide the following theorem to solve this type of optimization problem.
	
	\begin{theorem}[$\ell_{2,\log}$-shrinkage operator]
		Given matrix $Y$ and a nonnegative parameter $\tau$, the following problem
		\begin{equation}
		\label{eq_soft_thres}
		\min_{W} \frac{1}{2} \|Y-W\|_F^2 + \tau \|W\|_{2,\log}
		\end{equation}
		%
		admits closed-form solution in a column-wise manner:
		\begin{equation}
		\label{eq_sol_soft_thres}
		\!{ \!\!w_i =
			\begin{cases}
			\frac{\xi}{\|y_i\|_2}y_i,& \!\mbox{ if $f_i(\xi) \!\le\! \frac{1}{2}\|y_i\|_2^2$, $ (1 + \|y_i\|_2)^2 > 4\tau$, $\xi >0$}\\
			0, &\mbox{ otherwise, }
			\end{cases} }
		\end{equation}
		where $f_i(x) = \frac{1}{2}(x-\|y_i\|_2)^2 + \tau \log (1+x),$ and $\xi = \frac{\|y_i\|_2-1}{2} + \sqrt{\frac{(1+\|y_i\|_2)^2}{4} - \tau }.$
		
	\end{theorem}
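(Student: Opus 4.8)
The plan is to exploit the column-wise separability of \cref{eq_soft_thres}. Since $\|W\|_{2,\log}=\sum_j\log(1+\|w_j\|_2)$ and $\|Y-W\|_F^2=\sum_j\|y_j-w_j\|_2^2$, the objective splits into $n$ independent vector subproblems of the form $\min_{w_i}\tfrac12\|y_i-w_i\|_2^2+\tau\log(1+\|w_i\|_2)$, so it suffices to solve one generic column problem and then assemble the columns.

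First I would reduce this vector problem to a scalar one. Writing $x=\|w_i\|_2\ge 0$, for a fixed value of $x$ the quadratic term equals $\tfrac12(\|y_i\|_2^2-2\langle y_i,w_i\rangle+x^2)$, which over all $w_i$ of norm $x$ is minimized by aligning $w_i$ with $y_i$, i.e. $w_i=\tfrac{x}{\|y_i\|_2}y_i$ when $y_i\neq 0$ (Cauchy--Schwarz), with value $\tfrac12(x-\|y_i\|_2)^2$; the degenerate case $y_i=0$ forces $w_i=0$ and is consistent with the ``otherwise'' branch. Hence the subproblem becomes the scalar minimization $\min_{x\ge 0}f_i(x)$ with $f_i(x)=\tfrac12(x-\|y_i\|_2)^2+\tau\log(1+x)$, exactly as in the statement, and the optimal $w_i$ is recovered by rescaling $y_i$ to have the optimal norm.

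Next I would locate the minimizer of $f_i$ on $[0,\infty)$. Differentiating, $f_i'(x)=(x-\|y_i\|_2)+\tfrac{\tau}{1+x}$, and clearing denominators shows that stationary points solve $x^2+(1-\|y_i\|_2)x+(\tau-\|y_i\|_2)=0$, whose discriminant simplifies to $(1+\|y_i\|_2)^2-4\tau$; when this is positive the larger root is precisely $\xi=\tfrac{\|y_i\|_2-1}{2}+\sqrt{\tfrac{(1+\|y_i\|_2)^2}{4}-\tau}$. The structural fact I would use is $f_i''(x)=1-\tfrac{\tau}{(1+x)^2}$, which is negative for $x<\sqrt{\tau}-1$ and positive thereafter; thus $f_i$ is convex on $[0,\infty)$ when $\tau\le 1$ and ``concave-then-convex'' when $\tau>1$. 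Since $f_i'(x)\to+\infty$ at both ends of $(-1,\infty)$ and $f_i'$ attains its unique minimum on that interval exactly at the inflection point $\sqrt{\tau}-1$, the two real roots of $f_i'$ (when present) straddle that point, so $\xi$ always lies in the convex region and is the only interior local minimum, while the smaller root is a local maximum. Consequently the global minimum of $f_i$ over $[0,\infty)$ is attained either at the boundary $x=0$ or at $x=\xi$, and only when $\xi$ is real and positive.

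Finally I would compare the two candidate values: $f_i(0)=\tfrac12\|y_i\|_2^2$ because $\log 1=0$, against $f_i(\xi)$. This yields exactly the stated rule --- take $w_i=\tfrac{\xi}{\|y_i\|_2}y_i$ when $\xi$ exists ($(1+\|y_i\|_2)^2>4\tau$), is admissible ($\xi>0$), and improves on the origin ($f_i(\xi)\le\tfrac12\|y_i\|_2^2$), and take $w_i=0$ otherwise. The main obstacle I anticipate is the nonconvexity of $f_i$: the delicate part is the sign analysis of $f_i''$ together with the limiting behavior of $f_i'$ that rules out the spurious smaller stationary point and certifies that comparing only $x=0$ and $x=\xi$ suffices for global optimality; the remaining algebra (the quadratic solve and discriminant simplification) is routine.
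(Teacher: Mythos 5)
Your proposal is correct, and it reaches the same closed form as the paper, but by a genuinely more self-contained route. Both arguments begin with the same column-wise separation into problems $\min_{w_i}\tfrac12\|y_i-w_i\|_2^2+\tau\log(1+\|w_i\|_2)$. From there the paper treats $w_i$ as a rank-one matrix whose single singular value is $\|w_i\|_2$, invokes a known singular-value shrinkage result from the reference [peng2015subspace] to assert $w_i=u_i\sigma^*(w_i)v_i^T$ with $\sigma^*$ given by a cited formula, and then substitutes the trivial thin SVD $y_i=\tfrac{y_i}{\|y_i\|_2}\,\|y_i\|_2\,[1]$. You instead reduce to the scalar problem directly by the Cauchy--Schwarz alignment argument (for fixed $\|w_i\|_2=x$ the best $w_i$ is $\tfrac{x}{\|y_i\|_2}y_i$), and then you actually prove the scalar shrinkage result rather than citing it: the quadratic for the stationary points, the discriminant $(1+\|y_i\|_2)^2-4\tau$, the sign analysis of $f_i''(x)=1-\tau/(1+x)^2$ showing that the larger root $\xi$ is the unique interior local minimum while the smaller root is a local maximum, and the final comparison of $f_i(\xi)$ against $f_i(0)=\tfrac12\|y_i\|_2^2$. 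What the paper's approach buys is brevity and a link to the general spectral-shrinkage machinery; what yours buys is a complete, elementary verification of exactly the nonconvex one-dimensional analysis that the paper leaves implicit in the citation, which is arguably the delicate part of the theorem. The only points worth tidying are the degenerate cases $\tau=0$ (where the log term vanishes and $w_i=y_i$ falls out of the same formulas) and $y_i=0$ (where $\xi\le0$ and the otherwise branch applies), both of which you already flag as consistent with the stated rule.
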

	\begin{proof}
		The objective of \cref{eq_soft_thres} can be rewritten in a column-wise manner as
		\begin{equation}
		\label{eq_soft_thres_col}
		\min_{w_i} \sum_{i=1}^{n} \Big\{\frac{1}{2} \|y_i-w_i\|_2^2 + \tau \operatorname{log}( 1+\|w_i\|_2 ) \Big\},
		\end{equation}
		such that each $w_i$ can be obtained by column independently.
		For $w_i$, the subproblem is
		\begin{equation}
		\label{eq_soft_thres_wi}
		\min_{w_i} \frac{1}{2} \|y_i-w_i\|_2^2 + \tau \operatorname{log}( 1+\|w_i\|_2 ).
		\end{equation}
		We may treat $w_i$ as a special matrix and perform thin SVD to it.
		Then it is seen that $w_i$ has exactly one singular value, which is $\operatorname{\sigma}(w_i) = \sqrt{w_i^Tw_i} = \|w_i\|_2$,
		where $\operatorname{\sigma}(\cdot)$ is the singular value of the input vector.
		Thus, \cref{eq_soft_thres_wi} is equivalent to
		\begin{equation}
		\label{eq_soft_thres_wi_svd}
		\min_{w_i} \frac{1}{2} \|y_i-w_i\|_2^2 + \tau \operatorname{log}( 1+\sigma(w_i) ).
		\end{equation}
		Hence, according to \cite{peng2015subspace}, the solution to \cref{eq_soft_thres_wi_svd} is obtained with
		\begin{equation}
		w_i = u_i \sigma^*(w_i) v_i^T,
		\end{equation}
		where $u_i$ and $v_i$ are left and right singular vectors of $y_i$ and $\sigma^*(w_i)=\argmin_{x\ge 0} \frac{1}{2} (\sigma(y_i)-x)^2 + \tau \operatorname{log}( 1+x ).$
		Thus, by solving the equation, we have
		\begin{equation}
		\!\!\sigma^*(w_i)\!\! =\!\!
		\begin{cases}
		\!\xi ,& \! \mbox{ if $f_i(\xi) \!\le\! f_i(0)$, $ (1 + \sigma(y_i))^2 \!>\! 4\tau$, $\xi >0$, }\\
		\! 0, & \! \mbox{ otherwise, }
		\end{cases}
		\end{equation}
		with $f_i(x) = \frac{1}{2}(x-\sigma(y_i))^2 + \tau \log (1+x),$ and $\xi = \frac{\sigma(y_i)-1}{2} + \sqrt{\frac{(1+\sigma(y_i))^2}{4} - \tau }.$
		It is straightforward that $y_i = \frac{y_i}{\|y_i\|_2}\|y_i\|_2 [1]$ is a thin SVD of $y_i$, where [1] is a special matrix with 1 being the only one element.
		We substitute $u_i \!=\! \frac{y_i}{\|y_i\|_2}$, $\sigma(y_i) \!=\! \|y_i\|_2$, and $v_i = [1]$ into above equations,
		which leads to \cref{eq_sol_soft_thres} and concludes the proof.
	\end{proof}
	
	{
		
		\begin{rem}
			Regarding the problem \cref{eq_soft_thres}, it is easy to verify that for a given $Y$, a larger $\tau$ generally leads to a potentially sparser solution for $W$. 
			To see this, we consider the three conditions given in \cref{eq_sol_soft_thres}. 
			We only consider the case $0\le \tau < \frac{(1+\|y_i\|_2)^2}{4}$, since $\tau \ge \frac{(1+\|y_i\|_2)^2}{4}$ directly returns a zero matrix as the solution for $W$.
			Then, it is straightforward to see that for $\tau'>\tau$, the corresponding $\xi'= \frac{\|y_i\|_2-1}{2} + \sqrt{\frac{(1+\|y_i\|_2)^2}{4} - \tau'}<\xi.$
			Thus, given $Y$, $\textit{Prob}(\xi'>0|Y) < \textit{Prob}(\xi>0|Y)$, implying that the third condition is more difficult to satisfy for $\tau'$. 
			For the first condition, our analysis is as follows. Let $e=\frac{1+\|y_i\|_2}{2}$, then
			\begin{equation}
			\label{eq_sparse_analysis_f}
			\begin{aligned}
			f_i(\xi) &= \frac{1}{2}(\xi-\|y_i\|_2)^2 + \tau \log (1+\xi)	\\
			& = \frac{1}{2}(- e + \sqrt{e^2-\tau})^2 + \tau \log (e + \sqrt{e^2-\tau})	\\
			& = (e^2 - e\sqrt{e^2-\tau} - \frac{\tau}{2}) + \tau \log (e+\sqrt{e^2-\tau}).
			\end{aligned}
			\end{equation}
			We treat \cref{eq_sparse_analysis_f} as a function of $\tau$ and let $g(\tau)=f_i(\xi)$, then it is seen that 
			\begin{equation}
			\label{eq_sparse_analysis_g}
			\begin{aligned}
			g'(\tau) &= \frac{e}{2\sqrt{e^2-\tau}} - \frac{1}{2} + \log(e+\sqrt{e^2-\tau}) +  \frac{\tau}{e+\sqrt{e^2-\tau}}\cdot\frac{-1}{2\sqrt{e^2-\tau}} \\
			& = \frac{e(e+\sqrt{e^2-\tau}) - \sqrt{e^2-\tau}(e+\sqrt{e^2-\tau}) -\tau}{2\sqrt{e^2-\tau}(e+\sqrt{e^2-\tau})}+ \log(e+\sqrt{e^2-\tau}) \\
			& = \frac{e^2+e\sqrt{e^2-\tau} - e\sqrt{e^2-\tau} -e^2 +\tau -\tau}{2\sqrt{e^2-\tau}(e+\sqrt{e^2-\tau})}+ \log(e+\sqrt{e^2-\tau}) \\
			& = \log(e+\sqrt{e^2-\tau}) \\
			& = \log\big((1+\|y_i\|_2)/2+\sqrt{(1+\|y_i\|_2)^2/4-\tau}\big) = \log(1+\xi) >0.
			\end{aligned}
			\end{equation}
			Thus, it is straightforward that $g(\tau') > g(\tau)$ for $\tau'>\tau$,
			which implies that $\textit{Prob}(g(\tau')\le \frac{\|y_i\|_2^2}{2}|Y) < \textit{Prob}(g(\tau)\le \frac{\|y_i\|_2^2}{2}|Y)$.
			Thus, the first condition is also more difficult to satisfy for $\tau'$. 
			In summary, it is seen that the conditions in \cref{eq_sol_soft_thres} are more difficult to satisfy for a larger value of $\tau$,
			which suggests that a larger $\tau$ potentially leads to a larger number of zero columns for $w$ and thus leads to a potentially sparser solution.
			
		\end{rem}
		
	}
	
	For ease of representation, we denote the $\ell_{2,\log}$-shrinkage operator in \cref{eq_sol_soft_thres} as $\mathcal{S}_{\tau}(Y)$. Then $S$ admits a closed-form solution with the $\ell_{2,\log}$-shrinkage operator:
	\begin{equation}
	\label{eq_update_s}
	S = \mathcal{S}_{\frac{\gamma}{2}}(X-UV^T).
	\end{equation}
	To solve $U$ and $V$, the associated sub-problem is
	\begin{equation}
	\label{eq_obj_log_robust_uv}
	\begin{aligned}
	\min_{U,V} &  \|X-S-UV^T\|_F^2 + \lambda \textbf{Tr}(V^TLV) + \alpha \|U\|_{\log}  \\
	& + \beta \|V\|_{\log} \quad s.t. \quad u_{ij}\ge 0, v_{ij} \ge 0.
	\end{aligned}
	\end{equation}
	It is seen that the above problem is similar to \cref{eq_obj_log} except that the factorization is performed on $X-S$ in \cref{eq_obj_log_robust_uv} instead of $X$.
	To derive updating rules for $U$ and $V$ in a similar way to \cref{eq_update_u,eq_update_v}, we first provide the following theorem to guarantee the non-negativity of $X-S$, 
	which is essential for the optimization and nonnegativity of $U$ and $V$.
	
	\begin{theorem}
		\label{thm_nonnegative_xs}
		Given nonnegative data $X$ and values of $U$ and $V$, the matrix $X-S$ is nonnegative under the updating rule of \cref{eq_update_s}.
	\end{theorem}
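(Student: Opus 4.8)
The plan is to exploit the column-wise, shrinkage-type structure of $\mathcal{S}_{\gamma/2}$ established in \cref{eq_sol_soft_thres}: each column of $S$ is either set to zero or obtained from the corresponding column of $X-UV^T$ by multiplication with a scalar lying in $(0,1]$. In either case the corresponding column of $X-S$ turns out to be a nonnegative combination of the nonnegative vectors $x_i$ and $(UV^T)_i$, which forces nonnegativity of $X-S$ entrywise.

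First I would fix a column index $i$, write $y_i = x_i - (UV^T)_i$, and recall from \cref{eq_update_s} that $S = \mathcal{S}_{\gamma/2}(X-UV^T)$ acts on the $i$-th column with threshold $\tau = \gamma/2 \ge 0$ according to \cref{eq_sol_soft_thres}. If the ``otherwise'' branch applies, then $s_i = 0$ and hence $(X-S)_i = x_i \ge 0$ because $X$ is nonnegative; this column requires no further argument. In the remaining branch $s_i = \frac{\xi}{\|y_i\|_2}y_i$, and I would set $c := \xi/\|y_i\|_2$ and show $c \in (0,1]$. Positivity of $c$ is precisely the condition $\xi > 0$ that selects this branch. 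For $c \le 1$ it suffices to verify $\xi \le \|y_i\|_2$, i.e. $\sqrt{(1+\|y_i\|_2)^2/4 - \tau} \le (1+\|y_i\|_2)/2$, which holds immediately since $\tau \ge 0$.

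With $c \in (0,1]$ in hand, the key algebraic step is the identity
\begin{equation}
x_i - s_i = x_i - c\,y_i = (1-c)\,x_i + c\,(UV^T)_i,
\end{equation}
which exhibits $(X-S)_i$ as a convex-type combination of $x_i \ge 0$ and $(UV^T)_i \ge 0$ (the latter being nonnegative because $U$ and $V$ have nonnegative entries), with coefficients $1-c \ge 0$ and $c > 0$. Hence $(X-S)_i \ge 0$. Since $i$ was arbitrary, $X-S$ is nonnegative, which is the claim.

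The only point needing care is the verification that the shrinkage factor $c$ never exceeds $1$; this is a one-line consequence of $\tau \ge 0$, so I do not expect any real obstacle. Everything else is the displayed identity together with routine nonnegativity bookkeeping over the two branches of \cref{eq_sol_soft_thres}.
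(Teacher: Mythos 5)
Your proposal is correct and follows essentially the same route as the paper's proof: a column-wise case split on whether $s_j=0$, the observation that $\tau\ge 0$ forces the shrinkage factor $\xi/\|y_j\|_2$ into $(0,1]$, and the rewriting of $x_j-s_j$ as a nonnegative combination of $x_j$ and $(UV^T)_j$. Your displayed identity $(1-c)x_i + c(UV^T)_i$ states this combination more explicitly than the paper does, but the argument is the same.
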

	\begin{proof}
		We denote $M=X-UV^T$, then it is easy to see that the optimal $s_{j}$ is either a zero vector or scaled $m_j$ with a positive scaling factor $\xi/\|m_i\|_2$.
		We consider the following two cases.
		
		1) For the columns that $s_j=0$, it is easy to verify that $x_j - s_j = x_j$, which is nonnegative.
		
		2) For the columns that $s_j\not=0$, it is easy to see that 
		\begin{equation}
		\begin{aligned}
		\xi & = \frac{\|m_j\|_2-1}{2} + \sqrt{\frac{(1+\|m_j\|_2)^2}{4} - \tau } \\ 
		& \le \frac{\|m_j\|_2-1}{2} + \sqrt{\frac{(1+\|m_j\|_2)^2}{4}}  = \frac{\|m_j\|_2-1}{2} + \frac{1+\|m_j\|_2}{2} = \|m_j\|_2.
		\end{aligned}
		\end{equation}
		Thus, it is seen that the corresponding $s_j$ is obtained by scaling $m_j$ with a factor $\frac{\xi}{\|m_j\|_2}\le 1$, which indicates that $x_j - s_j = m_j - s_j + (UV^T)_j = \frac{\|m_j\|_2 - \xi}{\|m_j\|_2} + (UV^T)_j$ is nonnegative.
		
		It is seen that all columns of $X-S$ are nonnegative and thus the matrix $X-S$ is nonnegative.
	\end{proof}
	\cref{thm_nonnegative_xs} is important in that it guarantees the nonnegativity of $U$ and $V$ with the following updating rules,
	which is essential to the nature of parts-based representation:
	\begin{align}
	\label{eq_update_u_robust}
	\!u_{ij} \!&\! \leftarrow\! u_{ij} \frac{(2(X-S)V)_{ij}}{(2UV^TV  + \alpha\textbf{1}_{d\times k}\oslash(\textbf{1}_{d\times k}+U) )_{ij} } \\
	\label{eq_update_v_robust}
	\!v_{ij} \!&\! \leftarrow \! v_{ij}\! \frac{ 2((X\!-\!S)^TU \!+\! \lambda WV)_{ij}}{(2VU^TU \!+\! 2\lambda DV \!+\! \beta \textbf{1}_{n\times k} \!\oslash\! (\textbf{1}_{n\times k} + V))_{ij}}.
	\end{align}

	\subsection{Convergence Analysis for RLS-NMF}
	\label{sec_conv_rsl}
	For the updating rules of \cref{eq_update_s,eq_update_u_robust,eq_update_v_robust}, it is guaranteed that the objective function value sequence converges.
	We formally provide the theoretical result with the following theorem.
	\begin{theorem}
		Given nonnegative initial values of $U$ and $V$, the objective function of \cref{eq_obj_log_robust} is monotonally decreasing under the updating rules of \cref{eq_update_s,eq_update_u_robust,eq_update_v_robust}.
	\end{theorem}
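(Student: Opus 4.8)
The plan is to read \cref{eq_update_s,eq_update_u_robust,eq_update_v_robust} as a three-block alternating minimization scheme for \cref{eq_obj_log_robust}, with blocks $S$, $U$, and $V$, and to verify that each of the three partial updates does not increase the objective. Monotone decrease over a full sweep then follows by chaining the three inequalities, and since every term of \cref{eq_obj_log_robust} is nonnegative the objective is bounded below by $0$, so the objective value sequence converges.

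First I would treat the $S$-step. With $U$ and $V$ held fixed, the terms $\lambda\,\textbf{Tr}(V^TLV)$, $\alpha\|U\|_{\log}$, and $\beta\|V\|_{\log}$ are constants, so minimizing \cref{eq_obj_log_robust} over $S$ is exactly the subproblem \cref{eq_sub_s}, which (up to the factor $1/2$) is an instance of \cref{eq_soft_thres} with $Y = X-UV^T$ and $\tau = \gamma/2$. The $\ell_{2,\log}$-shrinkage operator theorem gives the \emph{global} minimizer of that subproblem in closed form, and this minimizer is precisely $\mathcal{S}_{\gamma/2}(X-UV^T)$, i.e. the update \cref{eq_update_s}. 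Hence replacing the current $S$ by this value cannot increase the objective.

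Next I would treat the $U$- and $V$-steps. With $S$ fixed, the remaining problem is \cref{eq_obj_log_robust_uv}, which is literally \cref{eq_obj_log} with the data matrix $X$ replaced by $X-S$. By \cref{thm_nonnegative_xs}, after the $S$-step the matrix $X-S$ is entrywise nonnegative, so every inequality used in the auxiliary-function analysis for \cref{eq_obj_log} — namely $(VU^TU)_{ab}\ge v_{ab}^{(t)}(U^TU)_{bb}$, $\lambda (DV)_{ab}\ge \lambda L_{aa}v_{ab}^{(t)}$, together with the nonnegativity of $((X-S)^TU)_{ab}$ and $((X-S)V)_{ij}$ that keeps the multiplicative ratios well defined and the iterates nonnegative — holds verbatim with $X\to X-S$. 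Therefore \cref{lemma_proof_v_G} and the proof of \cref{thm_conv} carry over unchanged: the quadratic $G$ built from the (now $X-S$–based) $F_{ab}$ is still an auxiliary function for $F_{ab}$, its unique minimizer is exactly \cref{eq_update_v_robust}, and by \cref{lemma_aux} the objective does not increase under this update; the $U$-update \cref{eq_update_u_robust} is the same statement with $X^T$ in place of $X$ and $\lambda=0$.

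Putting the three non-increase statements together along one sweep $S\to U\to V$ (in any fixed order) shows that the objective of \cref{eq_obj_log_robust} is monotonically non-increasing, and boundedness below gives convergence of the objective value sequence. The one place I would be most careful is the interface between the $S$-step and the $U,V$-steps: one must invoke \cref{thm_nonnegative_xs} to certify $X-S\ge 0$ \emph{before} reusing the auxiliary-function machinery, since that nonnegativity is exactly what licenses the termwise bounds above and guarantees the updated $U$ and $V$ stay nonnegative, so that the KKT-based derivation underlying \cref{eq_update_u_robust,eq_update_v_robust} remains valid. Everything else is a direct transcription of the arguments already established for \cref{eq_obj_log}.
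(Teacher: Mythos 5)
Your proposal is correct and follows essentially the same route as the paper's proof: the $S$-update is the exact minimizer of its subproblem via the $\ell_{2,\log}$-shrinkage operator, the $U$- and $V$-updates inherit the non-increase property from \cref{thm_conv} applied to $X-S$ (with \cref{thm_nonnegative_xs} supplying the needed nonnegativity), and chaining the three steps with boundedness below gives convergence. Your treatment is, if anything, more explicit than the paper's about why $X-S\ge 0$ must be certified before reusing the auxiliary-function machinery.
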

	\begin{proof}
		Given nonnegative initial values of $U$ and $V$, $X-S$ is nonnegative.
		Then the proof of $U$ and $V$ follows \cref{thm_conv} by replacing $X$ with $X-S$.
		Thus, the objective function is non-increasing under the updating rules of \cref{eq_update_u_robust,eq_update_v_robust}.
		For the updating rule of \cref{eq_update_s}, since it is the optimal solution to \cref{eq_sub_s}, the objective function is guaranteed to be non-increasing.
		Due to the nonnegativity of the objective function \cref{eq_obj_log_robust}, the value sequence must converge under the updating rules \cref{eq_update_s,eq_update_u_robust,eq_update_v_robust}.
	\end{proof}

	\section{Experiments}
	\label{sec_exp}
	In this section, we conduct extensive experiments to testify the effectiveness of the proposed method.
	In particular, we compare our method with several state-of-the-art NMF methods,
	including NMF \cite{lee1999learning}, weighted NMF (WNMF) \cite{kim2009weighted}, orthogonal NMF (ONMF) \cite{ding2006orthogonal}, convex NMF (CNMF) \cite{ding2010convex}, graph regularized NMF (GNMF) \cite{cai2011graph}, robust manifold NMF (RMNMF) \cite{huang2014robust}, and semi NMF (Semi-NMF) \cite{ding2010convex}.
	We testify all methods on 8 widely used benchmark data sets, including Yale \cite{belhumeur1997eigenfaces}, Jaffe \cite{lyons1998japanese}, ORL \cite{samaria1994parameterisation}, AR \cite{martinez1998ar}, Extended Yale B (EYaleB) \cite{georghiades2001few}, COIL20 \cite{Nene1996Columbia}, Pendigits \cite{Bache+Lichman:2013}, and Semeion \cite{semeion2014}.
	All examples of these data sets are scaled to have a unit $\ell_2$ norm. 
	Three evaluation metrics are used in the experiment, including clustering accuracy, normalized mutual information (NMI), and purity.
	{
		All these metrics have values ranging within $[0,1]$, where the higher values represent better clustering results.
		For these metrics, we briefly introduce them in the following.
		Clustering accuracy measures the extent to which each cluster contains data points from the same class.
		It is defined as 
		\begin{equation}
		\label{eq_acc}
		\text{Accuracy} = \frac{\sum_{i=1}^{n}\delta(map(r_i),l_i)}{n},
		\end{equation} 
		where $n$ is the total number of data points, 
		$r_i$ and $l_i$ are the predicted and true labels of the data point $x_i$, respectively, 
		$\delta(a,b)$ is a delta function that returns 1 when $a = b$ otherwise 0,
		and $map(r_i)$ is a mapping function that maps $r_i$ to an equivalent label by permutation such that \cref{eq_acc} is maximized. 
		Normalized mutual information measures the quality of the clusters, which is defined as 
		\begin{equation}
		\label{eq_nmi}
		\text{Normalized mutual information} = \frac{\sum_{i=1}^{N}\sum_{i=1}^{N}n_{i,j}\log\frac{n_{i,j}}{n_i \hat{n}_j}}{\sqrt{(\sum_{i=1}^{N}n_i\log\frac{n_i}{n})(\sum_{j=1}^{N}\hat{n}_j\log\frac{\hat{n}_j}{n}})},
		\end{equation} 
		where $N$ is the number of clusters, $n_i$ and $\hat{n}_j$ denote the sizes of the $i$-th cluster and $j$-th class, respectively, 
		and $n_{i,j}$ denotes the number of data points in the intersection between them. 
		Purity is a simple and transparent evaluation measure, which measures the extent to which each cluster contains data points from primarily one class. 
		It is defined as 
		\begin{equation}
		\label{eq_purity}
		\text{Purity} = \frac{1}{n}\sum_{i=1}^{N}\max_j (n_{i,j}).
		\end{equation} 
		
	}

	\begin{figure}[!tb]
		\centering{
			\includegraphics[width=0.9\columnwidth]{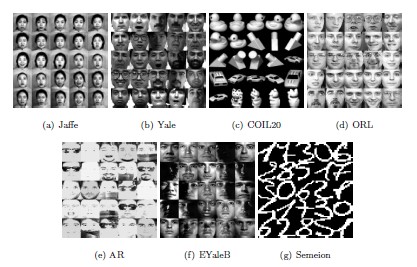}
			\caption{ Examples of the data sets used in our experiments. Because Pendigits data set has low resolution and it is hard to observe visual feature details, we do not show examples from this data set. }
		\label{fig_imgs}
		}
	\end{figure}

	For all methods in comparison, we follow the following settings for parameters.
	We tune all the balancing parameters within the set $\{0.001,0.01,0.1,1,10,$ $100,1000\}$ for all methods.
	For the graph Laplacian that is used in methods such as GNMF, RMNMF, and RLS-NMF,
	without loss of generality, we use the binary weighting strategy with 5 neighbors kept in the similarity graph matrix.
	For all methods in comparison, the exact number of clusters of the data is provided to determine $k$, which follows a common setting in literature \cite{peng2020nonnegative,cai2011graph}.
	After we obtain the factorization from each method, K-means clustering is performed on the representation matrix $V$ to obtain the final clustering result.
	For all methods, we tune the parameters with all possible combinations and report the best performance.

	\begin{table*}[h] 
		\scriptsize
		\centering
		\caption{Clustering Performance on 8 Benchmark Data Sets}
		\resizebox{0.95\textwidth}{0.3\textwidth}{
			\begin{tabular}{|p{1.6cm}<{\centering} | p{1.1cm}<{\centering} | p{1.1cm}<{\centering} | p{1.2cm}<{\centering} | p{1.1cm}<{\centering} | p{1.1cm}<{\centering} | p{1.1cm}<{\centering}  | p{1.4cm}<{\centering} | p{1.3cm}<{\centering} | p{1.3cm}<{\centering} |}
				\hline
				\multirow{2}{0.6cm}{\centering Data}& \multicolumn{9}{c|}{Accuracy (\%)}\\	
				\cline{2-10}
				\multirow{2}{0.6cm}{}
				& NMF  			& GNMF				& RMNMF		 		& WNMF				& CNMF 				& ONMF				&	Semi-NMF		 & LS-NMF           & RLS-NMF \\ \hline
				Semeion	& 52.86	& \textcolor{green}{63.03}	& 43.57	& 55.56	& 45.20	& 50.09	& 57.94	& \textcolor{blue}{68.49}  &\textcolor{red}{68.86}	\\
				EYaleB	& 12.34	& 16.16	& 16.86	& 14.79	& 9.61	& 11.43	& \textcolor{blue}{24.61}	& \textcolor{green}{18.64}    & \textcolor{red}{37.20}	\\
				ORL	    & 53.50	& 55.75	& 56.25	& 52.25	& 23.00	& 49.00	& \textcolor{green}{57.25}	& \textcolor{blue}{62.25} & \textcolor{red}{68.50}	\\
				AR 	    & 10.46	& 22.69	& 27.54	& 26.00	& 11.92	& 22.85	& \textcolor{red}{32.62} & \textcolor{green}{26.85} & \textcolor{blue}{29.23}	\\
				Jaffe	& 85.45	& \textcolor{blue}{97.65} &\textcolor{green}{95.77}	& 90.61	& 69.95	& 82.63	& \textcolor{blue}{97.65} & \textcolor{red}{98.59} &\textcolor{red}{98.59}	\\
				Yale	& 21.82	& \textcolor{green}{43.03} & \textcolor{blue}{44.85} & \textcolor{blue}{44.85}	& 40.00	& 41.21	& 40.00	& \textcolor{red}{48.48} &\textcolor{red}{48.48}	\\
				COIL20	& \textcolor{green}{67.43}	& \textcolor{blue}{82.71} & 56.39 & 63.40	& 56.87	& 57.36	& 61.74	&\textcolor{red}{85.97} &\textcolor{red}{85.97}\\
				Pendigits	& 77.97	& \textcolor{green}{79.20}	& 49.55	& 73.62	& 60.12	& 58.58 & 72.78 & \textcolor{blue}{88.16} & \textcolor{red}{88.26}	\\	\hline
				\hline
				\multirow{2}{0.6cm}{\centering Data}& \multicolumn{9}{c|}{Normalized Mutual Information (\%)} \\	
				\cline{2-10}
				\multirow{2}{0.6cm}{}
				& NMF  			& GNMF				& RMNMF		 		& WNMF				& CNMF 				& ONMF				&	Semi-NMF		 & LS-NMF           & RLS-NMF \\ \hline
				Semeion	& 43.50	&\textcolor{green}{58.88} & 35.44 & 44.82 & 37.96	& 40.12	& 47.91	& \textcolor{blue}{61.69} &  \textcolor{red}{62.98}	\\
				EYaleB	& 21.59	& 25.86	& 28.46	& 26.73	& 14.39	& 16.32	& \textcolor{blue}{43.32} & \textcolor{green}{29.12}  & \textcolor{red}{44.11}	\\
				ORL	    & 74.51	& 74.72	& 73.08	& 72.78	& 43.51	& 69.76	& \textcolor{green}{75.39}	&\textcolor{blue}{76.41} & \textcolor{red}{81.44}	\\
				AR  	& 25.71	& 43.49	& 44.04	& 43.43	& 27.12	& 41.23 & \textcolor{red}{49.46} & \textcolor{green}{44.53} & \textcolor{blue}{44.72} 	\\
				Jaffe	& 85.40	& \textcolor{blue}{96.50} & 93.54 & 89.44 & 70.65 & 84.46	& \textcolor{green}{96.48} &\textcolor{red}{98.13} & \textcolor{red}{98.13} \\
				Yale	& 29.25	& \textcolor{green}{48.34}	& 48.20	& \textcolor{blue}{51.09} & 41.37 & 43.99 & 47.52 & \textcolor{red}{51.50} & \textcolor{red}{51.50} \\
				COIL20	& \textcolor{green}{76.00}	& \textcolor{red}{90.59} & 66.65 & 72.73 & 70.53 & 70.94	& 73.92	&\textcolor{blue}{90.32} &\textcolor{blue}{90.32}\\
				Pendigits	& 71.09		& \textcolor{green}{73.02}		& 40.48		& 66.07	& 60.29	& 58.84 & 67.27	& \textcolor{blue}{83.88} & \textcolor{red}{84.06}	\\	\hline
				\hline	
				\multirow{2}{0.6cm}{\centering Data}& \multicolumn{9}{c|}{Purity (\%)}\\	
				\cline{2-10}
				\multirow{2}{0.6cm}{}
				& NMF  			& GNMF				& RMNMF		 		& WNMF				& CNMF 				& ONMF				&	Semi-NMF		 & LS-NMF           & RLS-NMF \\ \hline
				Semeion	& 54.05	&\textcolor{green}{65.29} & 45.95	& 56.56	& 45.20	& 54.74	& 58.82	&\textcolor{blue}{68.49}    &\textcolor{red}{68.86}	\\
				EYaleB	& 13.01	& 18.39	& 17.94	& 15.29	& 10.65	& 09.32	& \textcolor{blue}{25.43}	&\textcolor{green}{19.26}  & \textcolor{red}{38.19}	\\
				ORL	    & 60.50	& 62.25	& 61.00	& 58.50	& 25.00	& 47.25	& \textcolor{green}{63.25}	& \textcolor{blue}{65.75}    &\textcolor{red}{72.25}	\\
				AR	    & 11.08	& 25.23	&\textcolor{green}{29.31} & 28.92 & 13.00	& 20.38	& \textcolor{red}{35.38} & 29.00 &\textcolor{blue}{31.62} 	\\
				Jaffe	& 85.45	& \textcolor{blue}{97.65} &\textcolor{green}{95.77}	& 90.61	& 74.18	& 82.36	& \textcolor{blue}{97.65}	&\textcolor{red}{98.59} &\textcolor{red}{98.59}\\
				Yale	& 26.06	& 44.24	& \textcolor{green}{44.85}	&\textcolor{blue}{47.27} & 40.61	& 40.61	& 41.21	&\textcolor{red}{48.48}    &\textcolor{red}{48.48}\\
				COIL20	& \textcolor{green}{69.24}	&\textcolor{blue}{84.44} & 58.13	& 64.65	& 60.07	& 60.14	& 63.61	&\textcolor{red}{86.25} &\textcolor{red}{86.25}\\
				Pendigits	& 77.97		& \textcolor{green}{79.20}	& 49.57	& 73.62	& 65.78	& 65.01 & 72.78	&\textcolor{blue}{88.16} & \textcolor{red}{88.26}	\\	\hline
				
			\end{tabular}
		}
		\\The top three performances are highlighted in {\color{red}red}, {\color{blue}blue}, and {\color{green}green}, respectively.
		\label{tab_all}
	\end{table*}

	\begin{table*}[t] 
		\scriptsize
		\centering
		\caption{{Clustering Performance on Corrupted Jaffe Data Set}}
		\resizebox{0.95\textwidth}{0.18\textwidth}{
			\begin{tabular}{|p{1.6cm}<{\centering} | p{1.1cm}<{\centering} | p{1.1cm}<{\centering} | p{1.2cm}<{\centering} | p{1.1cm}<{\centering} | p{1.1cm}<{\centering} | p{1.1cm}<{\centering}  | p{1.4cm}<{\centering} | p{1.3cm}<{\centering} | p{1.3cm}<{\centering} |}
				\hline
				\multirow{2}{1.2cm}{\centering Corruption Level }& \multicolumn{9}{c|}{Accuracy (\%)}\\	
				\cline{2-10}
				\multirow{2}{0.6cm}{}
				& NMF  			& GNMF				& RMNMF		 		& WNMF				& CNMF 				& ONMF				&	Semi-NMF		 & LS-NMF           & RLS-NMF \\ \hline		
				20\%    &77.00 &\textcolor{green}{89.20} &76.53 & 84.51 & 75.59	& 68.08	& \textcolor{blue}{90.61} &\textcolor{green}{89.20}  & \textcolor{red}{92.96}\\
				40\%	&61.50 &\textcolor{blue}{77.93} &65.73 & 67.14 & 56.34	&58.69	& 63.38	&\textcolor{green}{76.53}	 &\textcolor{red}{80.75}	\\
				60\%	&52.11 &\textcolor{green}{57.75} &55.40 & 50.23	& 48.83	& 43.66	& 50.23	&\textcolor{blue}{68.54}  &\textcolor{red}{69.95}  \\
				\hline
				\hline
				\multirow{2}{1.2cm}{\centering Corruption Level }& \multicolumn{9}{c|}{Normalized Mutual Information (\%)}\\	
				\cline{2-10}
				\multirow{2}{0.6cm}{}
				& NMF  			& GNMF				& RMNMF		 		& WNMF				& CNMF 				& ONMF				&	Semi-NMF		 & LS-NMF           & RLS-NMF \\ \hline			
				20\%  	&79.30 &\textcolor{blue}{87.52} &75.51 & 81.80	& 76.80	& 70.93	& 86.28	&\textcolor{green}{86.82} &\textcolor{red}{89.31}    \\
				40\%	& 58.66&\textcolor{green}{73.20} & 62.11 & 69.84	& 57.71	& 54.51	& 60.30	& \textcolor{blue}{73.94} & \textcolor{red}{78.29}	\\
				60\%	& 49.25 &\textcolor{green}{59.07} & 50.11 & 47.97	& 44.75	& 41.26	& 47.96	& \textcolor{blue}{62.63} & \textcolor{red}{66.41} \\
				\hline\hline
				\multirow{2}{1.2cm}{\centering Corruption Level }& \multicolumn{9}{c|}{Purity (\%)}\\	
				\cline{2-10}
				\multirow{2}{0.6cm}{}
				& NMF  			& GNMF				& RMNMF		 		& WNMF				& CNMF 				& ONMF				&	Semi-NMF		 & LS-NMF           & RLS-NMF \\ \hline

				20\%   & 80.75 & \textcolor{green}{89.20} & 79.34 & 84.51	& 76.53	& 72.77	& \textcolor{blue}{90.61} & \textcolor{green}{89.20}   &\textcolor{red}{92.96}   	\\
				
				40\%	& 62.44 & \textcolor{blue}{77.93} & 67.61 & 71.83	& 61.50	& 59.62	& 63.38	&\textcolor{green}{76.53}  &\textcolor{red}{80.75}	\\
				60\%	& 54.93 & \textcolor{green}{60.56} & 56.34 & 54.46	& 50.23	& 46.01	& 52.11	& \textcolor{blue}{69.48}  &\textcolor{red}{69.95}	\\ \hline
				
			\end{tabular}
		}
		\label{tab_per_noisy_jaffe}
	\end{table*}
	
	\begin{table*}[t] 
		\scriptsize
		\centering
		\caption{Clustering Performance on Corrupted Yale Data Set}
		\resizebox{0.95\textwidth}{0.18\textwidth}{
			\begin{tabular}{|p{1.6cm}<{\centering} | p{1.1cm}<{\centering} | p{1.1cm}<{\centering} | p{1.2cm}<{\centering} | p{1.1cm}<{\centering} | p{1.1cm}<{\centering} | p{1.1cm}<{\centering}  | p{1.4cm}<{\centering} | p{1.3cm}<{\centering} | p{1.3cm}<{\centering} |}
				\hline
				\multirow{2}{1.2cm}{\centering Corruption Level } & \multicolumn{9}{c|}{Accuracy (\%)}\\	
				\cline{2-10}
				\multirow{2}{0.6cm}{}
				& NMF  			& GNMF				& RMNMF		 		& WNMF				& CNMF 				& ONMF				&	Semi-NMF		 & LS-NMF           & RLS-NMF \\ \hline			
				20\%    & 36.36& 36.36 & 38.18 & 38.18	& 34.55	& 30.30	& \textcolor{green}{39.39}	& \textcolor{blue}{44.85}    &\textcolor{red}{46.67}\\			
				40\%	& 31.52 & \textcolor{green}{39.39} & 38.18 & 38.79	& 27.27	& 34.55 & 27.88	& \textcolor{blue}{40.00} & \textcolor{red}{43.64}	\\
				60\%	& 30.91 &\textcolor{green}{32.73} & 30.91 & 31.52	& 26.06	& \textcolor{green}{32.73}	& 31.52	& \textcolor{blue}{39.39}  &\textcolor{red}{40.00}  \\
				\hline
				\hline
				\multirow{2}{1.2cm}{\centering Corruption Level }& \multicolumn{9}{c|}{Normalized Mutual Information (\%)}\\	
				\cline{2-10}
				\multirow{2}{0.6cm}{}
				& NMF  			& GNMF				& RMNMF		 		& WNMF				& CNMF 				& ONMF				&	Semi-NMF		 & LS-NMF           & RLS-NMF \\ \hline			
				20\%  	&39.55 & 42.21 & 41.38 & 43.43	& 39.71	& 36.98	& \textcolor{green}{46.60}	& \textcolor{blue}{46.62}  & \textcolor{red}{49.15}    \\			
				40\%	& 35.96 & 43.03 & 40.84 & \textcolor{green}{44.11}	& 33.55	& 38.08	& 34.39	& \textcolor{blue}{44.33}     &\textcolor{red}{44.62}	\\
				60\%	& 35.51 & \textcolor{green}{35.90} & 34.48 & 35.39	& 32.03	& 34.99	& 35.22	& \textcolor{blue}{42.59} & \textcolor{red}{44.09} \\
				\hline\hline
				\multirow{2}{1.2cm}{\centering Corruption Level }& \multicolumn{9}{c|}{Purity (\%)}\\	
				\cline{2-10}
				\multirow{2}{0.6cm}{}
				& NMF  			& GNMF				& RMNMF		 		& WNMF				& CNMF 				& ONMF				&	Semi-NMF		 & LS-NMF           & RLS-NMF \\ \hline
				20\%    & 37.58 & 38.79 & 40.00 & 40.00	& 37.58	& 32.73	& \textcolor{green}{43.64}	& \textcolor{blue}{44.85}  & \textcolor{red}{48.48}   	\\			
				40\%	& 34.55 & \textcolor{blue}{40.61} & 38.18 & \textcolor{green}{40.00}	& 30.91	& 35.15	& 29.09	& \textcolor{green}{40.00}	   &\textcolor{red}{43.64}	\\
				60\%	& 33.33 & 32.73 &\textcolor{green}{34.55} & 32.73 & 29.70	&\textcolor{green}{34.55}	& 33.94	&\textcolor{blue}{40.00}    &\textcolor{red}{41.21}	\\ \hline
				
			\end{tabular}
		}
		\label{tab_per_noisy_yale}
	\end{table*}
	
	\begin{table*}[t] 
		\scriptsize
		\centering
		\caption{Clustering Performance on Corrupted COIL20 Data Set}
		\resizebox{0.95\textwidth}{0.18\textwidth}{
			\begin{tabular}{|p{1.6cm}<{\centering} | p{1.1cm}<{\centering} | p{1.1cm}<{\centering} | p{1.2cm}<{\centering} | p{1.1cm}<{\centering} | p{1.1cm}<{\centering} | p{1.1cm}<{\centering}  | p{1.4cm}<{\centering} | p{1.3cm}<{\centering} | p{1.3cm}<{\centering} |}
				\hline
				\multirow{2}{1.2cm}{\centering Corruption Level }& \multicolumn{9}{c|}{Accuracy (\%)}\\	
				\cline{2-10}
				\multirow{2}{0.6cm}{}
				& NMF  & GNMF	& RMNMF	& WNMF	& CNMF	& ONMF	&	Semi-NMF & LS-NMF  & RLS-NMF \\ \hline		
				20\%    &60.76 &\textcolor{green}{79.31} &56.32 & 64.79 & 53.75	& 60.49	& 59.51 &\textcolor{blue}{80.97}  & \textcolor{red}{83.82}\\
				40\%	&56.53 &\textcolor{green}{67.85} &58.89 & 56.53 & 51.32	&58.47	& 56.94& \textcolor{blue}{76.25}	 &\textcolor{red}{80.49}	\\
				60\%	&61.46 & \textcolor{blue}{67.99} &56.74 & \textcolor{green}{61.60}	& 54.48	& 56.39	& 61.25	&\textcolor{red}{71.32}  &\textcolor{red}{71.32}  \\
				\hline
				\hline
				\multirow{2}{1.2cm}{\centering Corruption Level }& \multicolumn{9}{c|}{Normalized Mutual Information (\%)}\\	
				\cline{2-10}
				\multirow{2}{0.6cm}{}
				& NMF & GNMF	& RMNMF		& WNMF	& CNMF 	& ONMF	&Semi-NMF	& LS-NMF   & RLS-NMF \\ \hline			
				20\%  	&70.79 &\textcolor{green}{85.60} &68.89 & 72.60	& 66.88	& 71.60	& 70.91	&\textcolor{blue}{87.13} &\textcolor{red}{88.79}    \\
				40\%	& 69.36 &\textcolor{green}{74.60} & 69.48 & 69.36	& 63.75	& 68.83	& 69.60 & \textcolor{blue}{79.99} & \textcolor{red}{83.14}	\\
				60\%	& 69.32 & \textcolor{blue}{75.54} & 67.75 & \textcolor{green}{71.19}	& 62.87	& 66.77	& 68.12 & \textcolor{red}{77.30} & \textcolor{red}{77.30} \\
				\hline
				\multirow{2}{1.2cm}{\centering Corruption Level }& \multicolumn{9}{c|}{Purity (\%)}\\	
				\cline{2-10}
				\multirow{2}{0.6cm}{}
				& NMF  	& GNMF	& RMNMF		& WNMF & CNMF	& ONMF &	Semi-NMF & LS-NMF  & RLS-NMF \\ \hline
				20\%   & 61.46	& \textcolor{green}{80.76}	&57.99	&65.90	&58.82	&66.11	&61.53	&\textcolor{blue}{82.78}	&\textcolor{red}{84.10} \\
				40\%	&58.75	&\textcolor{blue}{71.46}	&61.18	&58.75	&56.04	&62.78	&57.92	&\textcolor{green}{76.32}	&\textcolor{red}{80.49} \\
				60\%	& 62.78	&\textcolor{blue}{69.03}	&60.35	&\textcolor{green}{62.92}	&56.87	&58.54	&61.81	&\textcolor{red}{73.26}	&\textcolor{red}{73.26} 	\\ \hline
				
			\end{tabular}
		}
		\\The top three performances are highlighted in {\color{red}red}, {\color{blue}blue}, and {\color{green}green}, respectively.
		\label{tab_per_noisy_COIL20}
	\end{table*}

	\subsection{Comparison of Clustering Performance}
	\label{sec_exp_clean}
	
	In this test, we conduct extensive experiments to testify the effectiveness of the proposed method.
	For all the methods, we follow the settings as described above and report the detailed clustering performance in \cref{tab_all}.
	It is observed that the proposed method generally achieves the best performance with significant improvements.
	In particular, the RLS-NMF achieves the best performances with significant improvements on 7 out of 8 data sets in clustering accuracy and purity, and 6 out of 8 data sets in NMI, respectively.
	On EYaleB and ORL data sets, the RLS-NMF improves the performance by about 10\% in accuracy and purity.
	In the other cases, the RLS-NMF achieves the top second performance with comparable performance to the best method.
	For example, on COIL20 data set, the RLS-NMF has the top performance in accuracy and purity, respectively.
	In NMI, the RSL-NMF achieves the top second performance, which is slightly inferior to the GNMF by 0.27\%.
	It is noted that among all baseline methods, GNMF and Semi-NMF are among the most competing ones.
	In particular, Semi-NMF has the best performance on AR data set in all metrics while GNMF has the best performance in NMI on COIL20 data set.
	In all other cases, these two methods have inferior performances to the RLS-NMF. 
	Generally, we observe that the RLS-NMF has better performance than the LS-NMF.
	However, on some data sets, such as Jaffe and COIL20, the RLS-NMF has the same performance as the LS-NMF.
	We explain this as follows: In such data sets as Jaffe and COIL20, the noise effects are not strong, which can be observed from \cref{fig_imgs}.
	Thus, we believe that the noise term and the $\ell_{2,\log}$-(pseudo) norm is not essential in this case.
	However, on other data sets with heavy noise effects, such as EYaleB data set, we can see that the RLS-NMF has significantly improved performance than the LS-NMF, which verifies the effectiveness and necessity of the robust model. 
	In the next test, we will further testify the RLS-NMF on data sets with artificial noise to confirm its effectiveness.

	\begin{figure}[!tb]
		\centering{
			\includegraphics[width=0.9\columnwidth]{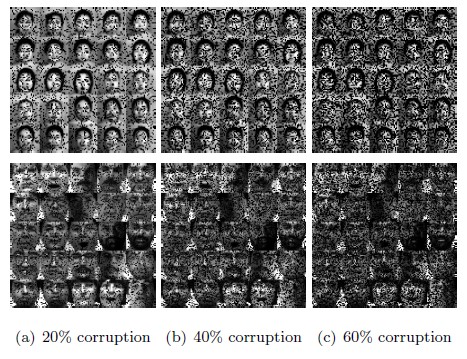}
		\caption{ Examples of the corrupted images from Jaffe (on top) and Yale (on bottom) data sets. From left to right are images with 20\%, 40\%, and 60\% level corruptions, respectively. }
		\label{fig_noisy_img}
		}
	\end{figure}

	\subsection{Comparison on Noisy Data with Random Corruptions}
	\label{sec_exp_corruption}
	{
		To further testify the effectiveness and robustness to noise effects of the RLS-NMF model,
		we conduct experiments on noisy data.
		In particular, we consider two types of noise in our experiments, including random corruption and Gaussian noise. 
		In this subsection, we first conduct experiments on randomly corrupted data.
		Throughout this subsection, we keep the same settings for all methods as in \cref{sec_exp_clean}. 
	}
	Without loss of generality, we conduct experiments on Yale, Jaffe, and COIL20 data sets, where we randomly remove 20\%, 40\%, and 60\% elements from these data sets, respectively. 
	We show some examples of the corrupted data examples in \cref{fig_noisy_img}. 
	It is seen that the images are severely damaged with the 60\% level of corruption, which makes the clustering of such data more challenging.
	We report the comparison results in \cref{tab_per_noisy_yale,tab_per_noisy_jaffe,tab_per_noisy_COIL20}.
	It is observed that the RLS-NMF obtains the best performances in all cases with significant improvements over the baseline methods.
	It should be noted that on original Yale and Jaffe data sets, the RLS-NMF obtains the same performance with the LS-NMF.
	However, with heavy noise effects, the RLS-NMF shows superior performance to the LS-NMF. 
	These observations show the improved robustness of the robust model on randomly corrupted data sets.

	%
	%
	%
	%
	%
	%
	%
	
	\begin{table*}[t] 
		\scriptsize
		\centering
		\caption{Clustering Performance on ORL Data Set with Gaussian Noise}
		\resizebox{0.95\textwidth}{0.18\textwidth}{
			\begin{tabular}{|p{1.6cm}<{\centering} | p{1.1cm}<{\centering} | p{1.1cm}<{\centering} | p{1.2cm}<{\centering} | p{1.1cm}<{\centering} | p{1.1cm}<{\centering} | p{1.1cm}<{\centering}  | p{1.4cm}<{\centering} | p{1.3cm}<{\centering} | p{1.3cm}<{\centering} |}
				\hline
				\multirow{2}{1.2cm}{\centering Variance }& \multicolumn{9}{c|}{Accuracy (\%)}\\	
				\cline{2-10}
				\multirow{2}{0.6cm}{}
				& NMF  & GNMF	& RMNMF	& WNMF	& CNMF	& ONMF	&	Semi-NMF & LS-NMF  & RLS-NMF \\ \hline		
				0.005    &56.50 &56.00 &\textcolor{green}{57.75} & 55.50 & 18.00	& 39.50	& 47.75 &\textcolor{blue}{61.00}  & \textcolor{red}{63.50}\\
				0.010	&49.75 &\textcolor{green}{52.75} &52.16 & 49.75 & 15.75	&28.50	& 32.75& \textcolor{blue}{54.25}	 &\textcolor{red}{54.75}	\\
				0.015	&33.50 & \textcolor{green}{39.25} &37.96 & 33.50	& 17.25	& 23.00	& 25.50	&\textcolor{blue}{40.75}  &\textcolor{red}{44.00}  \\
				\hline
				\hline
				\multirow{2}{1.2cm}{\centering Variance }& \multicolumn{9}{c|}{Normalized Mutual Information (\%)}\\	
				\cline{2-10}
				\multirow{2}{0.6cm}{}
				& NMF & GNMF	& RMNMF		& WNMF	& CNMF 	& ONMF	&Semi-NMF	& LS-NMF   & RLS-NMF \\ \hline			
				0.005  	&73.61 &73.86 &62.18 & \textcolor{green}{73.99}	& 37.39	& 59.44	& 65.01	&\textcolor{blue}{75.86} &\textcolor{red}{76.52}    \\
				0.010	& 67.32 &\textcolor{green}{69.67} & 69.10 & 67.32	& 34.09	& 49.39	& 51.49 & \textcolor{blue}{69.97} & \textcolor{red}{73.17}	\\
				0.015	& 56.62 & \textcolor{blue}{59.26} & 53.42 & 56.62	& 34.68	& 43.61	& 44.53 & \textcolor{green}{58.25} & \textcolor{red}{61.45} \\
				\hline
				\multirow{2}{1.2cm}{\centering Variance }& \multicolumn{9}{c|}{Purity (\%)}\\	
				\cline{2-10}
				\multirow{2}{0.6cm}{}
				& NMF  	& GNMF	& RMNMF		& WNMF & CNMF	& ONMF &	Semi-NMF & LS-NMF  & RLS-NMF \\ \hline
				0.005   & 60.75	& 60.00	&57.76	&\textcolor{green}{61.00}	&19.00	&43.50	&52.00	&\textcolor{blue}{64.50}	&\textcolor{red}{66.00} \\
				0.010	&54.75	&\textcolor{blue}{58.05}	&55.83	&54.75	&17.00	&32.25	&36.25	&\textcolor{green}{57.75}	&\textcolor{red}{60.25} \\
				0.015	& 37.75	&\textcolor{green}{43.25}	&42.85	&37.75	&17.50	&25.00	&28.00	&\textcolor{blue}{44.75}	&\textcolor{red}{48.25} 	\\ \hline
				
			\end{tabular}
		}
		\label{tab_per_gaussian_ORL}
	\end{table*}

	\begin{table*}[t] 
		\scriptsize
		\centering
		\caption{Clustering Performance on Yale Data Set with Gaussian Noise}
		\resizebox{0.95\textwidth}{0.18\textwidth}{
			\begin{tabular}{|p{1.6cm}<{\centering} | p{1.1cm}<{\centering} | p{1.1cm}<{\centering} | p{1.2cm}<{\centering} | p{1.1cm}<{\centering} | p{1.1cm}<{\centering} | p{1.1cm}<{\centering}  | p{1.4cm}<{\centering} | p{1.3cm}<{\centering} | p{1.3cm}<{\centering} |}
				\hline
				\multirow{2}{1.6cm}{\centering Variance }& \multicolumn{9}{c|}{Accuracy (\%)}\\	
				\cline{2-10}
				\multirow{2}{0.6cm}{}
				& NMF  & GNMF	& RMNMF	& WNMF	& CNMF	& ONMF	&	Semi-NMF & LS-NMF  & RLS-NMF \\ \hline		
				0.005    &40.61	&43.03	&\textcolor{green}{43.89}	&41.82	&24.85	&38.79	&41.82 &\textcolor{blue}{48.48}  & \textcolor{red}{49.70}\\
				0.010	&40.00	&\textcolor{blue}{44.24}	&39.77	&40.00	&32.12	&35.15	&36.36 & \textcolor{green}{43.03}	 &\textcolor{red}{48.48}	\\
				0.015	&36.36	&\textcolor{green}{41.21}	&37.56	&40.00	&23.64	&32.73	&36.36	&\textcolor{blue}{44.85}  &\textcolor{red}{46.06}  \\
				\hline
				\hline
				\multirow{2}{1.6cm}{\centering Variance }& \multicolumn{9}{c|}{Normalized Mutual Information (\%)}\\	
				\cline{2-10}
				\multirow{2}{0.6cm}{}
				& NMF & GNMF	& RMNMF		& WNMF	& CNMF 	& ONMF	&Semi-NMF	& LS-NMF   & RLS-NMF \\ \hline			
				0.005  	&45.68 &45.03	&44.66	&45.87	&32.26	&43.55	&\textcolor{green}{47.74} &\textcolor{red}{50.59} &\textcolor{blue}{49.04}    \\
				0.010	& 41.76	&\textcolor{blue}{49.16}	&41.06	&41.76	&37.92	&38.28	&\textcolor{green}{45.41} & 44.32 & \textcolor{red}{50.92}	\\
				0.015	&40.02	&\textcolor{green}{43.34}	&39.98	&42.47	&26.36	&34.47	&42.42 & \textcolor{red}{49.71} & \textcolor{blue}{49.03} \\
				\hline
				\multirow{2}{1.6cm}{\centering Variance }& \multicolumn{9}{c|}{Purity (\%)}\\	
				\cline{2-10}
				\multirow{2}{0.6cm}{}
				& NMF  	& GNMF	& RMNMF		& WNMF & CNMF	& ONMF &	Semi-NMF & LS-NMF  & RLS-NMF \\ \hline
				0.005   &41.21	&45.45	&43.89	&43.03	&29.70	&40.00	&\textcolor{green}{46.06}	&\textcolor{blue}{49.09}	&\textcolor{red}{49.70} \\
				0.010	&41.82	&\textcolor{blue}{45.45}	&39.97	&41.82	&33.33	&35.76	&38.79	&\textcolor{green}{44.24}	&\textcolor{red}{49.09} \\
				0.015	& 38.79	&\textcolor{green}{43.03}	&37.56	&40.61	&24.24	&33.33	&36.97	&\textcolor{blue}{46.67}	&\textcolor{red}{49.09} 	\\ \hline
				
			\end{tabular}
		}
		\label{tab_per_gaussian_Yale}
	\end{table*}
	
	\begin{table*}[t] 
		\scriptsize
		\centering
		\caption{Clustering Performance on COIL20 Data Set with Gaussian Noise}
		\resizebox{0.95\textwidth}{0.18\textwidth}{
			\begin{tabular}{|p{1.6cm}<{\centering} | p{1.1cm}<{\centering} | p{1.1cm}<{\centering} | p{1.2cm}<{\centering} | p{1.1cm}<{\centering} | p{1.1cm}<{\centering} | p{1.1cm}<{\centering}  | p{1.4cm}<{\centering} | p{1.3cm}<{\centering} | p{1.3cm}<{\centering} |}
				\hline
				\multirow{2}{1.6cm}{\centering Variance }& \multicolumn{9}{c|}{Accuracy (\%)}\\	
				\cline{2-10}
				\multirow{2}{0.6cm}{}
				& NMF  & GNMF	& RMNMF	& WNMF	& CNMF	& ONMF	&	Semi-NMF & LS-NMF  & RLS-NMF \\ \hline		
				0.005    &58.75	&\textcolor{green}{79.17}	&77.35	&63.40	&47.29	&59.65	&60.28
				&\textcolor{blue}{83.54}  & \textcolor{red}{83.96}\\
				0.010	&64.24	&\textcolor{green}{77.64}	&73.33	&64.24	&26.94	&61.32	&65.59
				& \textcolor{blue}{80.76}	 &\textcolor{red}{82.01}	\\
				0.015	&67.64	&\textcolor{green}{72.22}	&70.09	&59.93	&19.79	&57.50	&61.32
				&\textcolor{blue}{74.24}  &\textcolor{red}{77.50}  \\
				\hline
				\hline
				\multirow{2}{1.6cm}{\centering Variance }& \multicolumn{9}{c|}{Normalized Mutual Information (\%)}\\	
				\cline{2-10}
				\multirow{2}{0.6cm}{}
				& NMF & GNMF	& RMNMF		& WNMF	& CNMF 	& ONMF	&Semi-NMF	& LS-NMF   & RLS-NMF \\ \hline			
				0.005  	&73.76	&\textcolor{green}{85.85}	&85.07	&74.30	&56.95	&70.90	&70.91
				&\textcolor{blue}{90.35} &\textcolor{red}{90.63}    \\
				0.010	& 76.03	&84.98	&\textcolor{green}{85.59}	&76.03	&38.89	&70.72	&72.21
				& \textcolor{blue}{87.39} & \textcolor{red}{88.43}	\\
				0.015	& 75.95	&\textcolor{green}{79.03}	&77.74	&72.72	&28.28	&69.76	&68.40
				& \textcolor{blue}{79.47} & \textcolor{red}{80.98} \\
				\hline
				\multirow{2}{1.6cm}{\centering Variance }& \multicolumn{9}{c|}{Purity (\%)}\\	
				\cline{2-10}
				\multirow{2}{0.6cm}{}
				& NMF  	& GNMF	& RMNMF		& WNMF & CNMF	& ONMF &	Semi-NMF & LS-NMF  & RLS-NMF \\ \hline
				0.005   & 62.78	&\textcolor{green}{80.03}	&79.17	&65.35	&50.07	&62.64	&62.99
				&\textcolor{blue}{86.39}	&\textcolor{red}{86.45} \\
				0.010	&65.63	&\textcolor{green}{80.00}	&75.56	&65.63	&30.90	&62.78	&66.87
				&\textcolor{blue}{80.97}	&\textcolor{red}{82.15} \\
				0.015	& 69.10	&\textcolor{blue}{75.00}	&73.21	&62.64	&22.36	&59.24	&61.60
				&\textcolor{green}{74.31}	&\textcolor{red}{78.54} 	\\ \hline
				
			\end{tabular}
		}
		\label{tab_per_gaussian_COIL20}
	\end{table*}

	\begin{figure}[!tb]
		\centering{
			\includegraphics[width=0.9\columnwidth]{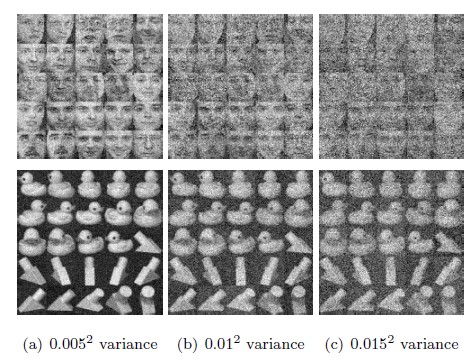}
			\caption{ Examples of the images with Gaussian noise from ORL (on top) and COIL20 (on bottom) data sets. From left to right are images with Gaussian noise with variance being 0.005$^2$, 0.01$^2$, and 0.015$^2$, respectively. }
			\label{fig_noisy_img_gaussian}
		}
	\end{figure}

    \begin{figure}[!tb]
    	\centering{
    	\includegraphics[width=0.9\columnwidth]{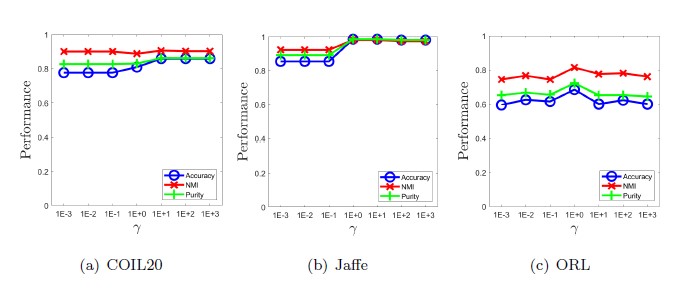}
    	\caption{Performance changes of RLS-NMF with respect to $\gamma$, where $\alpha$ and $\beta$ are fixed to be the optimal ones used in \cref{sec_exp_clean}, respectively. }
    	\label{fig_gamma}
    	}
    \end{figure}   
	
	{
		\subsection{Comparison on Noisy Data with Gaussian Noise }
		\label{sec_exp_gaussian}
		In this subsection, we further evaluate the proposed method on data sets with Gaussian noise. 
		In particular, we test all methods under different noise level conditions. 
		Without loss of generality, we conduct experiments using Yale, ORL, COIL20, and Semeion data sets.
		In our test, we add zero mean Gaussian noise to the data sets with variance varies within $\{0.005^2, 0.01^2, 0.015^2\}$, 
		which are referred as light, moderate, and heavy noise conditions in this test. 
		For each data set, we map the data to ensure nonnegativity by subtracting the smallest value. 
		We show some examples of ORL and COIL20 data sets for illustration of the noise effects in \cref{fig_noisy_img_gaussian}. 
		The other experimental settings remain the same as those in \cref{sec_exp_clean,sec_exp_corruption}. 
		We report the clustering results in \cref{tab_per_gaussian_Yale,tab_per_gaussian_COIL20,tab_per_gaussian_ORL}.
		As the noise becomes heavier, it is observed that all methods have significantly reduced performances, which confirms the adverse effects of noise. 
		Generally, we can see that the LS-NMF and RLS-NMF obtain the best performances among all methods. 
		Moreover, the RLS-NMF has relatively better performance than the LS-NMF.
		For example, the RLS-NMF has slightly improved performance over LS-NMF on COIL20 data set under the light noise condition. 
		However, the RLS-NMF has significantly improved performance over LS-NMF on COIL20 data set under moderate and heavy noise conditions,
		which, again, confirms the enhanced robustness of the RLS-NMF model and the effectiveness of using the $\ell_{2,\log}$-norm.

	}

	 \begin{figure}[!tb]
		\centering{
			\includegraphics[width=0.9\columnwidth]{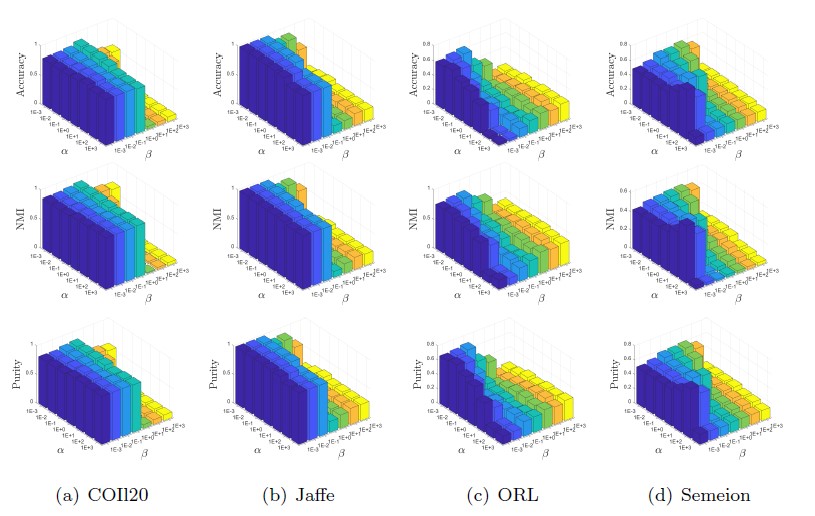}
			\caption{ Performance changes of RLS-NMF with respect to $\alpha$ and $\beta$ on different data sets, where $\gamma$ is fixed to be the optimal one used in \cref{sec_exp_clean}. }
		\label{fig_paras}
		}
	\end{figure}   
	\subsection{Parameter Sensitivity}
	For unsupervised learning methods, it is difficult to determine optimal parameters in real-world applications.
	Thus, it is important that unsupervised method performs well insensitively to parameters.
	{
		In this test, we show the sensitivity of RLS-NMF to the balancing parameters. 
		Without loss of generality, we show the results on four data sets, including COIL20, Jaffe, ORL, and Semeion.
		On other data sets, we can observe similar patterns.
		Specifically, we conduct experiments from two perspectives.
		That is, we first show the effects of parameter $\gamma$ and then show how the combination of parameters $\{\alpha,\beta\}$ affects the final clustering performance of RLS-NMF, respectively.}
	For $\gamma$, we fix $\alpha$ and $\beta$ to be the ones used in \cref{sec_exp_clean} and vary $\gamma$ within the set $\{0.001,0.01,0.1,1,10,100,1000\}$.
	We show the results in \cref{fig_gamma}.
	It is observed that with a broad range of values for $\gamma$, the RLS-NMF can achieve good performance.
	We can also observe that the RLS-NMF tends to perform better with larger $\gamma$ values, 
	which might be explained that larger $\gamma$ values render the RLS-NMF better account for noise effects.

	For $\{\alpha,\beta\}$, we fix $\gamma$ to be the ones used in \cref{sec_exp_clean} and vary $\alpha$ and $\beta$ within the set $\{0.001,0.01,0.1,$ $1,10,100,1000\}$.
	We show the results in \cref{fig_paras}.
	{
		It is seen that the RLS-NMF obtains relatively high performance within a broad range of parameter selections.}
	In particular, the RLS-NMF is insensitive to $\alpha$ and small $\beta$ values tend to be more effective.
	We observe similar patterns on other data sets, which suggests us to set small values for $\beta$ in real-world applications.

	%
	%
	%

	 \begin{figure}[!tb]
		\centering{
			\includegraphics[width=0.9\columnwidth]{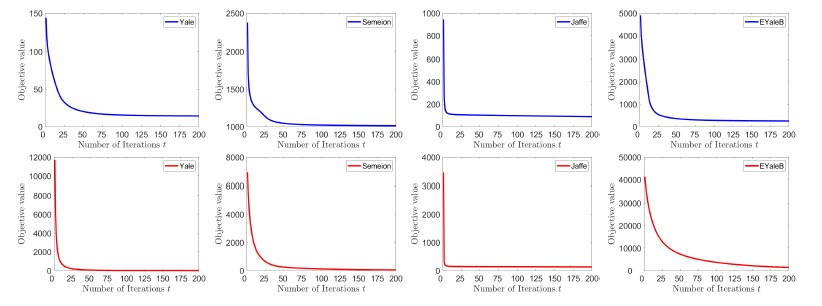}
		\caption{Examples of convergence curves of the LS-NMF (on top) and RLS-NMF (on bottom) on different data sets. }
		\label{fig_conv}
		}
	\end{figure} 
	\subsection{Convergence Analysis and Time Comparison}
	In \cref{sec_conv_sl,sec_conv_rsl}, we have provided theoretical analysis of the convergence of the objective value of the LS-NMF and RLS-NMF methods.
	In this section, we further provide experimental results to verify the convergent property of the optimization algorithms.
	{
		Without loss of generality, we conduct experiments on four data sets, including Yale, Semeion, Jaffe, and EYaleB,
		where we plot how the value of objective function changes with respect to the iteration numbers for both LS-NMF and RLS-NMF.
		For each data set, we fix the parameters to be the ones used in \cref{sec_exp_clean}, which lead to the highest clustering performance.
		We show the curves of the first 200 iterations in \cref{fig_conv}.}
	It is observed that the objective value sequences on these data sets indeed converge, which empirically verifies the convergent property of the proposed algorithms.
	Moreover, we observe that the proposed algorithms generally converge within about 100 iterations, which implies their fast convergence and efficiency.

	 \begin{figure}[!tb]
		\centering{
			\includegraphics[width=0.9\columnwidth]{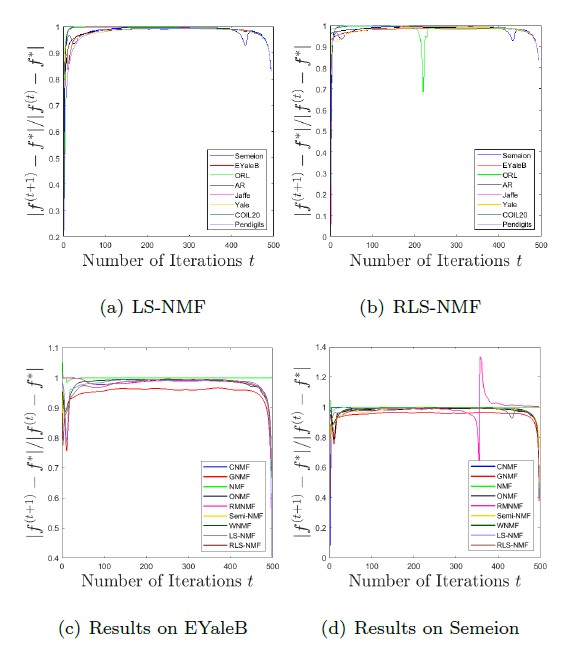}
			\caption{Empirical results on convergence rates of different methods: (a)-(b) curves of LS-NMF and RLS-NMF on all data sets, respectively; (c)-(d) comparison of all methods on EYaleB and Semeion data sets, respectively. }
			\label{fig_convrate}
		}
	\end{figure} 
	
	{
		Then, we further empirically investigate the convergence rate of the proposed algorithms. 
		In particular, we plot the values of $\big\{\frac{|f^{(t+1)} - f^*|}{|f^{(t)} - f^*|}\big\}$ in our experiment,
		where $\{f^{(t)}\}$ denotes the objective value sequence and $f^*$ denotes the convergent value of $\{f^{(t)}\}$. 
		As seen in above test, the proposed LS-NMF and RLS-NMF algorithms converge within about 200 iterations.
		Thus, in this test we treat $f^{(500)}$ as the empirical value of $f^*$, 
		where 500 is sufficiently large to guarantee the convergence of $\{f^{(t)}\}$, and show the results in \cref{fig_convrate_LS,fig_convrate_RLS}.
		It is seen that, as far as can be observed, the values of $\frac{|f^{(t+1)} - f^*|}{|f^{(t)} - f^*|}$ are always less than 1, 
		which indicates that the sequence $\{f^{(t)}\}$ is convergent with a linear convergence rate.
		Without loss of generality, we test all algorithms on EYaleB and Semeion data sets to compare their convergence rates and show the results in \cref{fig_convrate_EYaleB,fig_convrate_Semeion}.
		It is seen that the curves generally imply that these methods have a linear convergence rate,
		which basically suggests that all the methods have comparable convergence rates.

		Moreover, we test the time cost for all methods and show the results in \cref{fig_time}, where all algorithms are terminated after 500 iterations. 
		This test is conducted with MATLAB 2018a on a workstation with Intel(R) Xeon(R) W-2133 CPU. 
		It is seen that the LS-NMF and RLS-NMF have at least comparable efficiency to other methods, such as ONMF.
		Considering that the LS-NMF and RLS-NMF have superior performance to the other baseline methods in clustering,
		such comparable speed, though not the fastest, is indeed acceptable. 
		
	 \begin{figure}[!tb]
		\centering{
			\includegraphics[width=0.9\columnwidth]{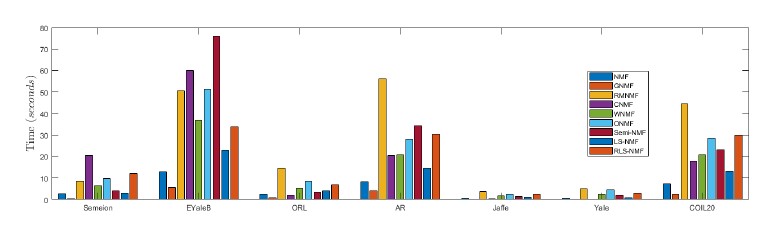}
				\caption{Comparison of time cost of all methods on different data sets. }
			\label{fig_time}
		}
	\end{figure}

	}

     \begin{figure}[!tb]
    	\centering{
    		\includegraphics[width=0.9\columnwidth]{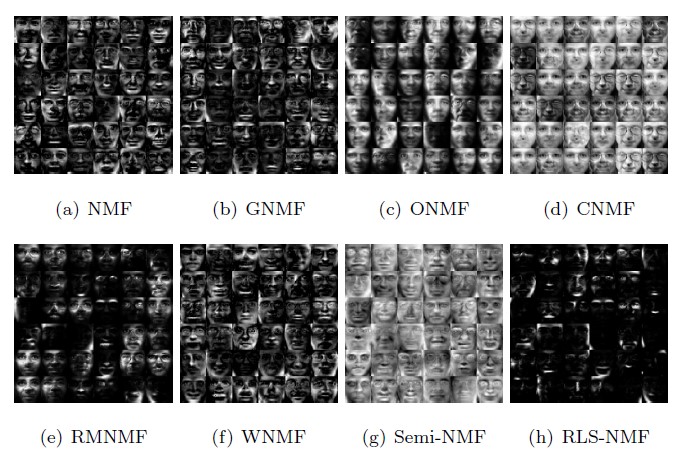}
    		\caption{ Visual examples of the basis vectors learned by various methods from ORL data set by different methods. 
    		}
    		\label{fig_basis}
    	}
    \end{figure} 	
	%
	

	\subsection{Sparsity Learning}
	\label{sec_basis}
	{
		In this test, we investigate the ability of the proposed method in sparsity learning with the $\ell_{\log}$-(pseudo) norm regularization.
		Throughout this test, we fix the parameters to those used in \cref{sec_exp_clean} for all methods,
		where these parameters lead to the best clustering performance.
		As pointed out in \cite{cai2011graph}, sparser basis learned by NMF models implies better parts-based representation.
		In this test, we first show some examples of the basis learned by different methods.}
	Without loss of generality, we conduct this test on ORL data set.
	The learned basis vectors have 1024 dimensions and we reshape them to size of 32$\times$32 for visual illustration.
	Then we visually show the reshaped basis vectors as gray scale images in \cref{fig_basis}.
	It is observed that the basis images learned by Semi-NMF are not sparse, which is explained by the mixed signs of the basis.
	Among all methods, RLS-NMF generates the sparsest basis vectors as can be observed from the results.
	The sparser basis vectors suggest that RLS-NMF can learn better parts-based representation than the baseline methods, 
	which implies its effectiveness in finding a low-dimensional representation of the data.
	
	{
	
	    \begin{figure}[!tb]
	    	\centering{
	    		\includegraphics[width=0.9\columnwidth]{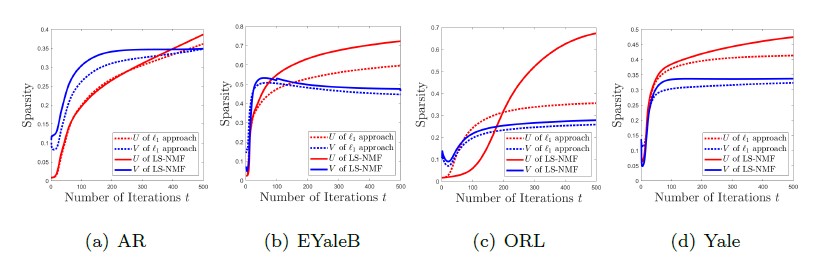}
	    		\caption{ Comparison of LS-NMF and the $\ell_1$ approach in sparse learning. }
	    		\label{fig_sparseness}
	    	}
	    \end{figure} 	
	}
	
	{
		
		Moreover, to verify the effectiveness of the $\ell_{\log}$-(pseudo) norm in restricting sparseness of the factor matrices,
		we compare the LS-NMF with the $\ell_1$-norm approach.
		The $\ell_1$-norm approach is obtained by replacing the $\ell_{\log}$-(pseudo) norm in \cref{eq_obj_log2} with the $\ell_1$-norm,
		where multiplicative updating rules are used in a way similar to LS-NMF.
		For both approaches, we show the results on several data sets, including AR, EYaleB, Yale, and ORL, where the parameters are tuned such that best clustering performance is observed.
		We show how the sparsity of $U$ and $V$ matrices changes with respect to the iterations in \cref{fig_sparseness},
		where the sparsity follows the definition given in \cite{hoyer2004non}.
		It is seen that the LS-NMF has superior performance in restricting the sparsity of the basis and representation vectors,
		which verifies the effectiveness of the proposed approach in learning parts-based representation.
	}

	

	\begin{figure}[!tb]
		\centering{
			\includegraphics[width=0.9\columnwidth]{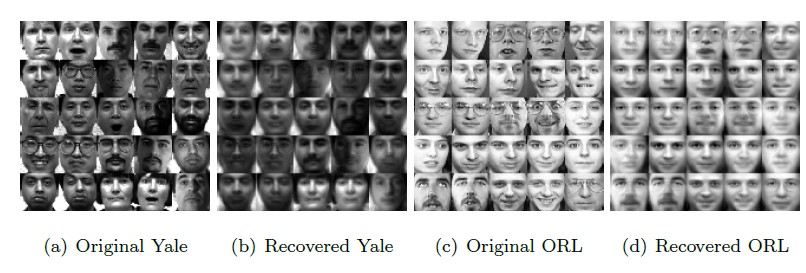}
			\caption{ Visual examples of the original and reconstructed images. (a)-(b) are from Yale while (c)-(d) are from ORL, respectively. }
			\label{fig_reconstruction}
		}
	\end{figure} 	
	
	\subsection{Data Reconstruction}
	To better understand the factorization results of the proposed method, we further show some examples of reconstructed data by RLS-NMF.
	Without loss of generality, we show some reconstructed example images from Yale and ORL data sets in \cref{fig_reconstruction}.
	The parameters are set to be the ones used in \cref{sec_exp_clean}, which leads to the best clustering performance.
	It is observed that the reconstructed images well capture the key features of original images.
	Moreover, some outliers like glasses are significantly removed in the reconstructed images. 
	These observations verify the effectiveness of the RLS-NMF in finding parts-based representations.

	\section{Conclusion}
	\label{sec_conclusion}
	In this paper, we propose a new type of sparse NMF methods, including the LS-NMF and RLS-NMF, with the latter being the robust version of the former. 
	The RLS-NMF learns the basis and representation matrices with $\ell_{\log}$-(pseudo) norm, which enhances the sparseness of the learned parts-based representation. 
	Moreover, to enhance the robustness of the RLS-NMF model to noise effects, a noise term is introduced, which is restricted to be example-wisely sparse with the novel $\ell_{2,\log}$-(pseudo) norm.
	Efficient multiplicative updating rules are developed for optimization, which have theoretical convergence guarantee.
	Extensive experiments verify the effectiveness of the RLS-NMF in clustering and data representation. 
	
	\section*{Acknowledgment}
	This work is supported by National Natural Foundation of China (NSFC) under Grants 61806106, 62172246, 61802215, and 61806045; 
	Shandong Provincial Natural Science Foundation, China under Grants ZR2019QF009, and ZR2019BF011; 
	Open Project Program of State Key Laboratory of Virtual Reality Technology and Systems, Beihang University, under Grant VRLAB2021A05; 
	Q.C. is partially supported by NIH R21AG070909 and UH3 NS100606-03 and a grant from the University of Kentucky.

	\bibliographystyle{model2-names}
	\bibliography{logNMF}
	
\end{document}